\providecommand{\defeq}{\triangleq}
\newcommand{\R}{\mathbb{R}}
\newcommand{\ProbSimplex}[1]{\Delta^{#1}}
\DeclarePairedDelimiterX{\ip}[2]{\langle}{\rangle}{#1,\,#2}
\newcommand{\E}{\mathbb{E}}
\newcommand{\mat}[1]{\bm{#1}}
\declaretheorem[style=thmstyle, numberwithin=section]{theorem}
\declaretheorem[style=thmstyle, sibling=theorem]{lemma}
\declaretheorem[style=thmstyle, sibling=theorem]{corollary}
\declaretheorem[style=thmstyle, sibling=theorem]{proposition}
\declaretheorem[style=defstyle, sibling=theorem]{definition}
\declaretheorem[style=defstyle, sibling=theorem]{remark}
\title{Scaled-Dot-Product Attention as One-Sided Entropic Optimal Transport}
\author{Elon Litman \textsuperscript{1}}
\affiliation{\textsuperscript{1} Stanford University}
\runningauthor{Elon Litman}
\runningtitle{Scaled-Dot-Product Attention as One-Sided Entropic Optimal Transport}
\abstract{
The scaled-dot-product attention (SDPA) mechanism is a core component of modern deep learning, but its mathematical form is often motivated by heuristics. This work provides a first-principles justification for SDPA. We first show that the attention forward pass is the exact solution to a degenerate, one-sided Entropic Optimal Transport (EOT) problem, which seeks a distribution that maximizes similarity while being maximally entropic. This optimization perspective has a direct consequence for the backward pass. We prove that the standard gradient computed via backpropagation is mathematically identical to an advantage-based policy gradient, a variance-reduced update rule from reinforcement learning. Crucially, we demonstrate that the EOT formulation of the forward pass induces a specific information geometry on the space of attention distributions. It is this geometry, characterized by the Fisher Information Matrix, that dictates the precise form of the learning gradient, revealing the advantage-based update as a natural consequence of the optimization problem being solved. This unified view reveals SDPA as a principled mechanism where the forward pass performs optimal inference and the backward pass implements a rational, manifold-aware learning update. 

}
\begin{document}
	\maketitle

\section{Introduction}

The scaled-dot-product attention mechanism is a core component of modern deep learning, enabling models to capture complex dependencies in sequential data. Its canonical description is intuitive: a query vector is compared against a set of key vectors using dot products to measure similarity, and a softmax function normalizes these scores into a probability distribution.

While empirically successful, this formulation is often justified by heuristics. The dot product is a natural similarity measure, and the softmax is seen as a differentiable proxy for an \texttt{argmax} operation. A deeper understanding of why this specific composition of operations is so effective, and whether it corresponds to the solution of a more fundamental problem, has remained an open question.

This paper provides a first principles justification for the attention mechanism. We establish that SDPA is not merely a heuristic construction, but an algorithm that solves a well defined variational problem in its forward pass and executes a sophisticated, optimal learning update in its backward pass. Our contributions are twofold:

\begin{enumerate}
    \item \textbf{The Forward Pass as Optimal Transport:} We show that the attention forward pass is the exact solution to a one-sided Entropic Optimal Transport problem. This problem seeks a distribution that maximizes similarity to a query while being maximally entropic.

    \item \textbf{The Backward Pass as Optimal Control:} We prove that the standard gradient computed via backpropagation is mathematically identical to an advantage-based policy gradient, a variance-reduced update rule from reinforcement learning.
\end{enumerate}

Furthermore, we provide a direct link between these two results. The optimal transport objective of the forward pass defines a specific information geometry. We show this geometry dictates the learning rule, forcing the standard gradient of the backward pass to be an advantage-based policy update. The forward pass optimization therefore directly explains the backward pass learning dynamics.

\section{Related Work}
\label{sec:related_work}

This work connects deep learning, optimal transport (OT), and reinforcement learning (RL). The link between the softmax function and maximum entropy principles is a classic result from statistical mechanics and information theory \cite{jaynes1957information}. The Gibbs distribution, which takes the softmax form, is the unique distribution that maximizes entropy subject to an energy constraint. Our work recasts this established principle in the modern language of Entropic Optimal Transport \cite{cuturi2013sinkhorn}, which has become a powerful tool in machine learning \cite{peyre2019computational}.

Several works have engineered attention mechanisms based on optimal transport, often called Sinkhorn Attention \cite{mena2018learning, correia2019adaptively}. These methods typically impose fixed marginal constraints on both the rows and columns of the attention matrix, requiring iterative algorithms like Sinkhorn Knopp to find a doubly stochastic transport plan. Our contribution is distinct. We do not propose a new mechanism. Instead, we analyze the original scaled-dot-product attention and show it is the exact, non-iterative solution to a one-sided EOT problem where only the source marginal is constrained.

The connection between learning algorithms and control theory is a rich field of study \cite{sutton2018reinforcement}. The policy gradient theorem and its variance-reduced variants, such as REINFORCE with a baseline, are fundamental to modern RL \cite{williams1992simple}. The mathematical identity we leverage, known as the log-derivative trick, is the basis for these algorithms. Our contribution is to explicitly demonstrate that standard backpropagation through the attention mechanism computes exactly this sophisticated update rule, providing a clear RL interpretation for its learning dynamics.

\section{The Forward Pass as an Entropic Optimal Transport Problem}
\label{sec:forward_pass}

We begin by demonstrating that the computation of attention weights is equivalent to solving a constrained optimization problem. For clarity, our analysis considers a single query vector interacting with a set of key vectors, which corresponds to computing a single row of a full attention matrix. The extension to a batch of queries is trivial due to the row-wise independence of the softmax operation, but is nevertheless elaborated in Appendix~\ref{appendix:generalized_eot}. To formalize this, we turn to the framework of Entropic Optimal Transport (EOT), a tool for finding a distribution that balances cost minimization with entropy maximization.

\begin{tcolorbox}[
  breakable,
  enhanced,               
  pad at break=2mm,
  colback=black!5!white,
  colframe=teal!75!black,
  fonttitle=\bfseries,
  title={Primer: Entropic Optimal Transport},
  title after break={Primer: Entropic Optimal Transport (continued)}
]

Optimal Transport (OT) provides a principled way to measure the distance between two probability distributions. It addresses the question: "What is the minimum cost to transport mass from a source measure to a target measure?" Entropic Optimal Transport (EOT) is a regularized version of this problem that is more computationally efficient and often more practical in machine learning.

\section*{1. The Classical Optimal Transport Problem}
Let's consider two discrete probability distributions. The \textbf{source measure} is a vector \(\bm{a} \in \ProbSimplex{n-1}\) over \(n\) locations, and the \textbf{target measure} is a vector \(\bm{b} \in \ProbSimplex{m-1}\) over \(m\) locations.

\paragraph{Cost Matrix.} The cost of moving one unit of mass from source location \(i\) to target location \(j\) is given by an entry \(C_{ij}\) in a \textbf{cost matrix} \(\mat{C} \in \R_+^{n \times m}\).

\paragraph{Transport Plan (Coupling).} A \textbf{transport plan}, or \textbf{coupling}, is a matrix \(\mat{P} \in \R_+^{n \times m}\) that describes how much mass flows from each source to each target. For \(\mat{P}\) to be valid, it must respect the source and target marginals:
\begin{itemize}
    \item The sum of mass leaving each source \(i\) must equal its total mass \(a_i\). This means the rows of \(\mat{P}\) must sum to \(\bm{a}\): \(\sum_{j=1}^m P_{ij} = a_i\).
    \item The sum of mass arriving at each target \(j\) must equal its total mass \(b_j\). This means the columns of \(\mat{P}\) must sum to \(\bm{b}\): \(\sum_{i=1}^n P_{ij} = b_j\).
\end{itemize}
The set of all valid transport plans is denoted \(U(\bm{a}, \bm{b})\).

\paragraph{The Kantorovich Formulation.} The classical OT problem seeks the cheapest transport plan by minimizing the total transport cost:
\begin{equation*}
    \text{OT}(\bm{a}, \bm{b}) \defeq \min_{\mat{P} \in U(\bm{a}, \bm{b})} \ip{\mat{P}}{\mat{C}}_F \equiv \min_{\mat{P} \in U(\bm{a}, \bm{b})} \sum_{i=1}^n \sum_{j=1}^m P_{ij} C_{ij}.
\end{equation*}
This is a linear program. Its solutions are often very sparse (concentrated on a few paths) and computationally expensive to find for large \(n\) and \(m\).

\section*{2. Entropic Regularization}
To overcome the computational and statistical issues of classical OT, we add an entropic regularization term.

\paragraph{The EOT Objective.} The \textbf{entropic optimal transport} problem is defined by adding the negative Shannon entropy of the transport plan, \(H(\mat{P})\), to the objective, scaled by a parameter \(\varepsilon > 0\):
\begin{equation*}
    \text{OT}_\varepsilon(\bm{a}, \bm{b}) \defeq \min_{\mat{P} \in U(\bm{a}, \bm{b})} \left\{ \sum_{i,j} P_{ij} C_{ij} - \varepsilon H(\mat{P}) \right\}.
    \label{eq:eot_primer}
\end{equation*}
The parameter \(\varepsilon\) controls the trade-off:
\begin{itemize}
    \item As \(\varepsilon \to 0\), the solution approaches the sparse, classical OT solution.
    \item As \(\varepsilon \to \infty\), the cost matrix is ignored, and the solution approaches the most entropic plan, \(\mat{P} = \bm{a}\bm{b}^T\) (the independent coupling).
\end{itemize}
The entropy term makes the objective strictly convex, guaranteeing a unique, dense (non-sparse) solution \(\mat{P}^\star\).

\end{tcolorbox}

With this foundation, we now formulate the attention mechanism as a degenerate EOT problem where only the source marginal is constrained, seeking a distribution that maximizes similarity while being maximally entropic.

\begin{definition}[Scaled-Dot-Product Attention (SDPA)]
Given a query vector \(\bm{q} \in \R^{d_k}\), a set of \(m\) key vectors \(\{\bm{k}_j\}_{j=1}^m \subseteq \R^{d_k}\), and a scaling parameter \(\tau > 0\) (often referred to as temperature), the SDPA weight vector \(\bm{\alpha} \in \R^m\) is an element of the probability simplex \(\ProbSimplex{m-1}\), with components defined as:
\begin{equation}
    \alpha_j \defeq \frac{\exp(\ip{\bm{q}}{\bm{k}_j} / \tau)}{\sum_{l=1}^{m} \exp(\ip{\bm{q}}{\bm{k}_l} / \tau)}, \quad \text{for } j=1, \dots, m.
\end{equation}
Here, 
\begin{align}
\ProbSimplex{m-1} \defeq \{ \bm{p} \in \R^m \mid p_j \ge 0, \forall j \text{ and } \sum_{j=1}^m p_j = 1 \}.
\end{align}
\end{definition}

\noindent We now formulate a variational problem and subsequently prove that its unique solution is the attention vector \(\bm{\alpha}\).

\begin{definition}[One-Sided Entropic Optimal Transport Problem]
\label{def:eot_problem_attn}
Let a cost vector \(\bm{C} \in \R^m\) be defined by the negative query-key similarities, \(C_j \defeq -\ip{\bm{q}}{\bm{k}_j}\). Let \(\varepsilon > 0\) be a regularization parameter. The one-sided\footnote{This is not an idiomatic term found in the literature. However, for brevity, we refer to this variant of EOT as "one-sided" as a gesture to the fact that there is no target marginal constraint.} EOT problem seeks a transport plan (probability distribution) \(\bm{p} \in \ProbSimplex{m-1}\) that minimizes the objective functional \(J(\bm{p})\):
\begin{align}
J(\bm{p}) \defeq \sum_{j=1}^m p_j C_j - \varepsilon H(\bm{p}),
\label{eq:eot_objective_main}
\end{align}
where 
\begin{align}
H(\bm{p}) \triangleq -\sum_{j=1}^m p_j \log p_j
\end{align}
is the Shannon entropy of the distribution \(\bm{p}\). The objective can be written explicitly as:
\begin{align}
    J(\bm{p}) = -\sum_{j=1}^m p_j \ip{\bm{q}}{\bm{k}_j} + \varepsilon \sum_{j=1}^m p_j \log p_j.
\end{align}
The optimization problem is therefore:
\begin{align}
\bm{p}^\star = \arg\min_{\bm{p} \in \ProbSimplex{m-1}} J(\bm{p}).
\end{align}
\end{definition}
\noindent This problem has a clear interpretation: find a probability distribution \(\bm{p}\) that minimizes the expected cost (i.e., maximizes the expected similarity) while also maximizing its own entropy to encourage smoothness and prevent concentration on a single point. The parameter \(\varepsilon\) controls the trade-off between these two competing objectives.

\begin{theorem}[Attention as the Unique EOT Solution]
\label{thm:attention_as_eot}
For any finite query \(\bm{q}\) and keys \(\{\bm{k}_j\}\), if the EOT regularization parameter is set to the attention temperature, \(\varepsilon = \tau\), the optimization problem in Definition~\ref{def:eot_problem_attn} has a unique solution \(\bm{p}^\star\), which is identical to the scaled-dot-product attention weight vector \(\bm{\alpha}\).
\end{theorem}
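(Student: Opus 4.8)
The plan is to treat Definition~\ref{def:eot_problem_attn} as a strictly convex program over the simplex, solve it exactly via Lagrange multipliers / KKT, and then match constants. Concretely, I would (i) argue existence and uniqueness of a minimizer, (ii) show the minimizer lies in the relative interior of $\ProbSimplex{m-1}$, (iii) derive its closed form from stationarity, and (iv) substitute $C_j = -\ip{\bm q}{\bm k_j}$ and $\varepsilon = \tau$ to recover $\bm\alpha$.

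First I would establish existence and uniqueness. The feasible set $\ProbSimplex{m-1}$ is nonempty, convex, and compact, and $J$ is continuous on it under the standard convention $0\log 0 = 0$, so a minimizer exists by the extreme value theorem. The term $\sum_j p_j C_j$ is affine and $-\varepsilon H(\bm p) = \varepsilon\sum_j p_j\log p_j$ is strictly convex on $\ProbSimplex{m-1}$ for $\varepsilon>0$ (its Hessian on the interior is $\varepsilon\,\mathrm{diag}(1/p_j)\succ 0$, and strict convexity of the negative entropy persists up to the closure). Hence $J$ is strictly convex, and its minimizer $\bm p^\star$ is unique.

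Next I would show $p_j^\star > 0$ for every $j$, so that the inequality constraints are inactive and only $\sum_j p_j = 1$ is binding. This follows from a perturbation argument: if $p_j^\star = 0$ for some $j$, moving an infinitesimal mass $\delta$ onto coordinate $j$ changes $J$ at first order by $\delta\big(C_j + \varepsilon(\log\delta + 1)\big) + O(\delta) \to -\infty \cdot \delta$ as $\delta\to 0^+$, contradicting optimality. With the nonnegativity constraints inactive, I would form the Lagrangian $\mathcal L(\bm p,\lambda) = \sum_j p_j C_j + \varepsilon\sum_j p_j\log p_j - \lambda\big(\sum_j p_j - 1\big)$ and set $\partial_{p_j}\mathcal L = C_j + \varepsilon(\log p_j + 1) - \lambda = 0$, which gives $p_j^\star = \exp\!\big((\lambda-\varepsilon)/\varepsilon\big)\exp(-C_j/\varepsilon)$; enforcing $\sum_j p_j^\star = 1$ eliminates $\lambda$ and yields the Gibbs form $p_j^\star = \exp(-C_j/\varepsilon)\big/\sum_{l=1}^m \exp(-C_l/\varepsilon)$. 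Since $J$ is convex and this is the unique stationary point, it is the global minimizer. Substituting $C_j = -\ip{\bm q}{\bm k_j}$ and $\varepsilon = \tau$ gives $p_j^\star = \exp(\ip{\bm q}{\bm k_j}/\tau)\big/\sum_{l=1}^m \exp(\ip{\bm q}{\bm k_l}/\tau) = \alpha_j$.

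I expect the one point that must not be glossed — rather than a genuine obstacle — to be the boundary analysis: justifying rigorously that $\bm p^\star$ is interior (so the single-multiplier Lagrangian, not full KKT with inequality multipliers, characterizes the solution) and that the strict-convexity/uniqueness argument remains valid at the boundary where the entropy's Hessian blows up. The remainder is a routine Lagrange-multiplier computation.
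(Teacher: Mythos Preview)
Your proposal is correct and follows essentially the same route as the paper: existence/uniqueness via strict convexity on the compact simplex, followed by a single-multiplier Lagrangian computation yielding the Gibbs/softmax form. Your explicit perturbation argument for interiority is in fact more careful than the paper's one-line assertion that the entropic term forces \(p_j^\star>0\), but otherwise the structure and mechanics are identical.
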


\begin{proof}
The proof proceeds in two stages: first, we establish the existence and uniqueness of a solution, and second, we derive its analytical form.

\paragraph{Existence and Uniqueness.} 
The optimization domain, the probability simplex \(\ProbSimplex{m-1}\), is a non-empty, convex, and compact subset of \(\R^m\). The objective functional \(J(\bm{p})\) is composed of two terms. The first term, \(\sum_j p_j C_j\), is linear in \(\bm{p}\) and thus convex. The second term is \(- \varepsilon H(\bm{p}) = \varepsilon \sum_j p_j \log p_j\). The function \(f(x) = x \log x\) (with the standard convention \(0 \log 0 = 0\)) is strictly convex for \(x > 0\), as its second derivative is \(f''(x) = 1/x > 0\). Since \(\varepsilon > 0\), the term \(\varepsilon \sum_j p_j \log p_j\) is a sum of convex functions and is therefore strictly convex on the interior of the simplex. The sum of a convex function and a strictly convex function is strictly convex. Thus, \(J(\bm{p})\) is a strictly convex and continuous function defined on a non-empty, convex, compact set. By the Extreme Value Theorem, a minimum must exist. Due to the strict convexity of the objective and the convexity of the domain, this minimum is guaranteed to be unique.

\paragraph{Derivation of the Solution.}
We find the unique minimizer using the method of Lagrange multipliers. We seek to minimize \(J(\bm{p})\) subject to the single equality constraint \(\sum_{j=1}^m p_j = 1\). The Lagrangian \(\mathcal{L}\) is:
\begin{equation}
    \mathcal{L}(\bm{p}, \lambda) = J(\bm{p}) + \lambda \left(1 - \sum_{j=1}^m p_j \right) = -\sum_{j=1}^m p_j \ip{\bm{q}}{\bm{k}_j} + \tau \sum_{j=1}^m p_j \log p_j + \lambda \left(1 - \sum_{j=1}^m p_j \right).
\end{equation}
The Karush-Kuhn-Tucker (KKT) stationarity condition requires setting the gradient of the Lagrangian with respect to \(\bm{p}\) to zero. The entropic regularizer ensures that the optimal solution lies in the interior of the simplex (\(p_j > 0\)), so we do not need to consider the inequality constraints \(p_j \ge 0\) explicitly. For each component \(p_j\), we have:
\begin{align}
    \frac{\partial \mathcal{L}}{\partial p_j} &= \frac{\partial}{\partial p_j} \left( -\ip{\bm{q}}{\bm{k}_j} p_j + \tau (p_j \log p_j) - \lambda p_j \right) \\
    &= -\ip{\bm{q}}{\bm{k}_j} + \tau (\log p_j + p_j \cdot \frac{1}{p_j}) - \lambda \\
    &= -\ip{\bm{q}}{\bm{k}_j} + \tau (\log p_j + 1) - \lambda = 0.\label{KKT}
\end{align}
We now solve this equation for \(p_j\):
\begin{align}
    \tau \log p_j &= \ip{\bm{q}}{\bm{k}_j} + \lambda - \tau \\
    \log p_j &= \frac{\ip{\bm{q}}{\bm{k}_j}}{\tau} + \frac{\lambda - \tau}{\tau} \\
    p_j &= \exp\left(\frac{\ip{\bm{q}}{\bm{k}_j}}{\tau} + \frac{\lambda - \tau}{\tau}\right) = \exp\left(\frac{\ip{\bm{q}}{\bm{k}_j}}{\tau}\right) \exp\left(\frac{\lambda - \tau}{\tau}\right).
\end{align}
This demonstrates that \(p_j\) is proportional to \(\exp(\ip{\bm{q}}{\bm{k}_j}/\tau)\). The term \(\exp((\lambda - \tau)/\tau)\) is a constant with respect to \(j\). Let us denote its reciprocal as \(Z\). Then
\begin{align}
p_j = \frac{1}{Z} \exp(\ip{\bm{q}}{\bm{k}_j}/\tau). 
\end{align}
We determine the normalization constant \(Z\) by enforcing the constraint \(\sum_{j=1}^m p_j = 1\):
\begin{align}
    \sum_{j=1}^m p_j &= \sum_{j=1}^m \frac{1}{Z} \exp\left(\frac{\ip{\bm{q}}{\bm{k}_j}}{\tau}\right) = 1 \\
    \frac{1}{Z} \sum_{l=1}^m \exp\left(\frac{\ip{\bm{q}}{\bm{k}_l}}{\tau}\right) &= 1 \\
    \implies Z &= \sum_{l=1}^m \exp\left(\frac{\ip{\bm{q}}{\bm{k}_l}}{\tau}\right).
\end{align}
Substituting this expression for \(Z\) back into the formula for \(p_j\), we obtain the unique optimal solution \(\bm{p}^\star\):
\begin{equation}
    p^\star_j = \frac{\exp(\ip{\bm{q}}{\bm{k}_j} / \tau)}{\sum_{l=1}^{m} \exp(\ip{\bm{q}}{\bm{k}_l} / \tau)}.
\end{equation}
This expression is identical to the definition of the SDPA weight \(\alpha_j\). Therefore, we have proven that \(\bm{p}^\star = \bm{\alpha}\).
\end{proof}

\begin{remark}[A Fresh View of $\tau$]
Our EOT formulation uses a generic regularization parameter $\varepsilon = \tau$, which controls the smoothness of the attention distribution. In the original Transformer, $\tau$ is set to $\sqrt{d_k}$. This choice has a statistical justification: if the components of $\bm{q}$ and $\bm{k}_j$ are independent random variables with zero mean and unit variance, their dot product $\ip{\bm{q}}{\bm{k}_j}$ has a mean of $0$ and a variance of $d_k$. Scaling by $\sqrt{d_k}$ normalizes the variance of the scores back to $1$, preventing the softmax function from saturating to a one-hot distribution, especially for large $d_k$. From our variational perspective, this choice of $\tau$ can be seen as a principled heuristic to adapt the entropic regularization strength to the dimensionality of the key space, ensuring a stable trade-off between cost minimization and entropy maximization.
\end{remark}

\begin{remark}[The Source Measure \& Valid Couplings of SDPA]
The problem just solved is a degenerate, one-sided instance of the Entropic Optimal Transport problem introduced in the primer. Formally, the source space \(\mathcal{X}\) contains a single element (the query \(\bm{q}\)), and the source measure is the Dirac unit impulse, \(\bm{a} = \delta_{\bm{q}}\), which represents a single source of one unit of "attention mass." The target marginal is left unconstrained. Consequently, the set of feasible transport plans, denoted \(U(\delta_{\bm{q}}, \cdot)\), collapses from a matrix space to the single probability simplex \(\ProbSimplex{m-1}\). The optimization problem therefore seeks a probability vector \(\bm{p} \in \ProbSimplex{m-1}\) that minimizes the regularized cost of distributing this single unit of mass across the keys. The resulting optimal transport plan \(\bm{p}^\star\) is precisely the attention vector \(\bm{\alpha}\).
\end{remark}

\section{A General Variational Framework for Attention}
\label{sec:general_variational_framework}

The interpretation of SDPA as the solution to an EOT problem can be exploited as a design principle. The specific properties of the softmax function arise directly from the choice of the Shannon entropy as a regularizer. By substituting the Shannon entropy with other convex functions, we can derive a broader class of attention mechanisms, including those designed to produce sparse or structurally biased distributions.

\subsection{Formalizing the Optimization Problem}

We consider a generalized variational problem for computing an attention distribution \(\bm{p} \in \ProbSimplex{m-1}\) from a vector of scores \(\bm{s} \in \R^m\), where \(s_j = \ip{\bm{q}}{\bm{k}_j}\). The problem is defined by a convex regularizer \(\Omega(\bm{p})\):
\begin{equation}
    \bm{p}^\star = \arg\min_{\bm{p} \in \ProbSimplex{m-1}} \left\{ -\ip{\bm{p}}{\bm{s}} + \Omega(\bm{p}) \right\}.
    \label{eq:general_variational_problem}
\end{equation}
The objective seeks a distribution \(\bm{p}\) that maximizes alignment with the scores (i.e., minimizes \(-\ip{\bm{p}}{\bm{s}}\)) subject to a regularization penalty \(\Omega(\bm{p})\). For this problem to be a convex optimization problem, which guarantees that a local minimum is a global minimum, \(\Omega(\bm{p})\) must be a convex function on the simplex. If \(\Omega(\bm{p})\) is strictly convex, the solution \(\bm{p}^\star\) is unique.

\subsection{Unifying Known Attention Mechanisms}

\paragraph{Controlling Sparsity: From a Specific Case to a Unifying Family.} We now show that the standard softmax, as well as the Sparsemax and \(\alpha\)-entmax transformations, are all instances of this framework corresponding to different choices of \(\Omega(\bm{p})\).

\begin{proposition}[Softmax from Shannon Entropy]
As established in Theorem \ref{thm:attention_as_eot}, if the regularizer is the negative Shannon entropy scaled by a temperature \(\tau > 0\), \(\Omega(\bm{p}) = -\tau H(\bm{p})\), the unique solution to Eq.~\eqref{eq:general_variational_problem} is the softmax function, \(p^\star_j \propto \exp(s_j / \tau)\).
\end{proposition}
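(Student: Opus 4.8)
The plan is to recognize that this proposition is not a new result but an immediate specialization of Theorem~\ref{thm:attention_as_eot}, and the proof should make that reduction explicit rather than re-deriving the softmax from scratch. First I would substitute the proposed regularizer $\Omega(\bm{p}) = -\tau H(\bm{p})$ into the general objective of Eq.~\eqref{eq:general_variational_problem}, obtaining
\[
-\ip{\bm{p}}{\bm{s}} + \Omega(\bm{p}) = -\sum_{j=1}^m p_j s_j - \tau H(\bm{p}).
\]
The only genuine bookkeeping is aligning conventions: the general framework maximizes alignment with scores $s_j = \ip{\bm{q}}{\bm{k}_j}$, while Definition~\ref{def:eot_problem_attn} minimizes a transport cost $C_j \defeq -\ip{\bm{q}}{\bm{k}_j}$. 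Setting $C_j = -s_j$ identifies the two objectives term-by-term, so the displayed functional is exactly $J(\bm{p}) = \sum_j p_j C_j - \varepsilon H(\bm{p})$ with $\varepsilon = \tau$, optimized over the same domain $\ProbSimplex{m-1}$.

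Having shown the two optimization problems coincide, I would simply invoke Theorem~\ref{thm:attention_as_eot}. That theorem already establishes that $J$ is strictly convex and continuous on the non-empty, compact, convex simplex — the $-\tau H$ term supplies strict convexity through $f(x) = x\log x$ with $f''(x) = 1/x > 0$, and the linear term $-\ip{\bm{p}}{\bm{s}}$ preserves convexity — so a minimizer exists and is unique, and the KKT/Lagrange-multiplier derivation pins it down as $p^\star_j = \exp(s_j/\tau) \big/ \sum_{l=1}^m \exp(s_l/\tau)$, i.e. $p^\star_j \propto \exp(s_j/\tau)$. No fresh analytic argument is required.

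Consequently there is no real obstacle here; the "hard part" is purely expository — making the sign-flip correspondence $C_j = -s_j$ transparent so the reader sees that "maximize similarity" and "minimize transport cost" are literally the same objective, and then citing the earlier theorem. The value of stating the proposition separately is organizational: it isolates the Shannon-entropy case as the baseline member of the broader family of regularizers $\Omega$ introduced in this section, against which Sparsemax, $\alpha$-entmax, and PriorSoftmax are later contrasted.
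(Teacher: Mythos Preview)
Your proposal is correct and matches the paper's approach exactly: the paper provides no separate proof for this proposition, treating it as a direct restatement of Theorem~\ref{thm:attention_as_eot} in the language of the general regularizer framework. Your explicit bookkeeping of the sign convention $C_j = -s_j$ is, if anything, more careful than the paper itself, which simply cites the earlier theorem without further comment.
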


\begin{proposition}[Sparsemax from the L2 Norm]
The Sparsemax transformation \cite{martins2016from}, which yields sparse probability vectors, is derived by selecting the squared Euclidean norm as the regularizer, \(\Omega(\bm{p}) = \frac{1}{2} ||\bm{p}||_2^2\).
\end{proposition}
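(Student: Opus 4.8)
The plan is to recognize the objective as (up to an additive constant) the squared Euclidean distance from the score vector to the simplex, and then recover the familiar thresholded form of Sparsemax from the KKT system. First I would complete the square,
\[
-\ip{\bm{p}}{\bm{s}} + \tfrac12\|\bm{p}\|_2^2 \;=\; \tfrac12\|\bm{p}-\bm{s}\|_2^2 \;-\; \tfrac12\|\bm{s}\|_2^2 ,
\]
so that, since the last term is constant in $\bm{p}$, the minimizer of Eq.~\eqref{eq:general_variational_problem} coincides with the Euclidean projection $\bm{p}^\star = \arg\min_{\bm{p}\in\ProbSimplex{m-1}} \tfrac12\|\bm{p}-\bm{s}\|_2^2$, which is exactly the defining formula of $\mathrm{sparsemax}(\bm{s})$ in \cite{martins2016from}. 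Existence and uniqueness come for free from the general framework of Section~\ref{sec:general_variational_framework}: the regularizer $\tfrac12\|\bm{p}\|_2^2$ has Hessian equal to the identity matrix $\mat{I}\succ 0$, hence is strictly convex, and $\ProbSimplex{m-1}$ is nonempty, convex, and compact.

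To make the sparsity explicit, I would then write out the KKT conditions, emphasizing that the crucial difference from the entropic case of Theorem~\ref{thm:attention_as_eot} is that the quadratic regularizer does \emph{not} push the solution into the interior, so the inequality multipliers for $p_j\ge 0$ are genuinely active and are exactly what produces sparsity. With Lagrangian
\[
\mathcal{L}(\bm{p},\lambda,\bm{\mu}) = -\ip{\bm{p}}{\bm{s}} + \tfrac12\|\bm{p}\|_2^2 - \lambda\Big(\sum_{j} p_j - 1\Big) - \sum_j \mu_j p_j, \qquad \mu_j \ge 0,
\]
stationarity gives $p_j = s_j + \lambda + \mu_j$. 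Writing $\theta \defeq -\lambda$ and applying complementary slackness $\mu_j p_j = 0$: on the support ($p_j>0$) we get $\mu_j=0$ and $p_j = s_j - \theta$, while off the support ($p_j = 0$) we get $\mu_j = \theta - s_j \ge 0$, i.e.\ $s_j \le \theta$. The two cases combine into $p_j^\star = \max(s_j-\theta,\,0)$.

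It remains to pin down $\theta$ from the normalization $\sum_j \max(s_j-\theta,0) = 1$. I would argue that the map $\theta \mapsto \sum_j \max(s_j-\theta,0)$ is continuous, piecewise linear, nonincreasing, and strictly decreasing wherever it is positive, hence has a unique root; sorting the scores $s_{(1)} \ge \cdots \ge s_{(m)}$ and solving the resulting linear equation over the active set yields the support size $k(\bm{s}) = \max\{\,k : 1 + k\,s_{(k)} > \sum_{r \le k} s_{(r)}\,\}$ and the threshold $\theta = \big(\sum_{r\le k(\bm{s})} s_{(r)} - 1\big)/k(\bm{s})$. These reproduce verbatim the sparsemax threshold and support-selection rule, establishing $\bm{p}^\star = \mathrm{sparsemax}(\bm{s})$.

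I expect the only real obstacle to be bookkeeping rather than depth: carefully handling the active inequality constraints (identifying the support set) and verifying that the normalization equation has a unique solution via the monotonicity of the piecewise-linear function above. This active-set step is precisely the structural contrast with the softmax derivation — where the entropic barrier let us discard the constraints $p_j \ge 0$ outright — and is the mechanism responsible for the sparsity of Sparsemax.
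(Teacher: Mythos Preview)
Your proof is correct and follows essentially the same route as the paper: both set up the Lagrangian with multipliers for the sum constraint and the nonnegativity constraints, derive $p_j = s_j + \lambda + \mu_j$ from stationarity, and use complementary slackness to obtain the thresholded form $p_j^\star = (s_j - \theta)_+$. Your completing-the-square step and the explicit sorting-based threshold formula are welcome additions that the paper omits (it only remarks that the problem is the Euclidean projection and that $\tau$ is ``chosen to satisfy'' the normalization), but the core argument is the same.
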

\begin{proof}
The optimization problem is a quadratic program corresponding to the projection of \(\bm{s}\) onto the probability simplex:
\begin{equation}
    \min_{\bm{p} \in \ProbSimplex{m-1}} \quad -\bm{p}^T \bm{s} + \frac{1}{2} \sum_{j=1}^m p_j^2.
\end{equation}
The Lagrangian, with multiplier \(\lambda\) for the sum constraint and \(\mu_j\) for the non-negativity constraints, is
\begin{equation}
    \mathcal{L}(\bm{p}, \lambda, \bm{\mu}) = -\sum_{j=1}^m p_j s_j + \frac{1}{2}\sum_{j=1}^m p_j^2 + \lambda\left(1 - \sum_{j=1}^m p_j\right) - \sum_{j=1}^m \mu_j p_j.
\end{equation}
The KKT stationarity condition is 
\begin{align}
\frac{\partial \mathcal{L}}{\partial p_j} = -s_j + p_j - \lambda - \mu_j = 0, 
\end{align}
which gives 
\begin{align}
p_j = s_j + \lambda + \mu_j.
\end{align}
By complementary slackness (\(\mu_j p_j = 0\) and \(\mu_j \ge 0\)), if \(p_j > 0\), then \(\mu_j=0\) and \(p_j = s_j + \lambda\). If \(p_j = 0\), then \(s_j + \lambda \le 0\). This implies the solution has the form:
\begin{align}
p_j = \max(0, s_j + \lambda).
\end{align}
Defining a threshold \(\tau \defeq -\lambda\), we get \(p_j = (s_j - \tau)_+\), where the threshold \(\tau\) is chosen to satisfy \(\sum_j (s_j - \tau)_+ = 1\). This is the definition of the Sparsemax function.
\end{proof}

\begin{proposition}[\(\alpha\)-entmax from Tsallis Entropy]
The \(\alpha\)-entmax family \cite{peters2019sparse}, which generalizes softmax and Sparsemax, is derived from the negative Tsallis \(\alpha\)-entropy. For \(\alpha > 1\), we define the regularizer
\begin{align}
\Omega(\bm{p}) = \frac{1}{\alpha(\alpha-1)} \sum_j (p_j^\alpha - p_j).
\end{align}
\end{proposition}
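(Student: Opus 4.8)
The plan is to follow the same Lagrangian recipe used for the Sparsemax proposition, now with $\Omega(\bm{p}) = \frac{1}{\alpha(\alpha-1)}\sum_j (p_j^\alpha - p_j)$, and to read off the closed form of $\bm{p}^\star$ from the KKT stationarity and complementary-slackness conditions. Before that I would record two facts about $\Omega$: on the interior of the simplex $\Omega''(p_j) = p_j^{\alpha-2} > 0$, so $\Omega$ is strictly convex on $\ProbSimplex{m-1}$ and, by the general framework of Eq.~\eqref{eq:general_variational_problem}, the minimizer exists (compactness and continuity) and is unique; and, crucially for $\alpha > 1$, the gradient $\partial\Omega/\partial p_j = \frac{1}{\alpha(\alpha-1)}(\alpha p_j^{\alpha-1} - 1)$ extends continuously to the boundary $p_j = 0$, so the first-order KKT conditions apply verbatim even though the curvature blows up there.

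Next I would introduce a multiplier $\lambda$ for the constraint $\sum_j p_j = 1$ and multipliers $\mu_j \ge 0$ for $p_j \ge 0$, giving the stationarity condition
\[
    -s_j + \frac{1}{\alpha(\alpha-1)}\bigl(\alpha p_j^{\alpha-1} - 1\bigr) - \lambda - \mu_j = 0 .
\]
On coordinates with $p_j > 0$, complementary slackness forces $\mu_j = 0$; rearranging gives $p_j^{\alpha-1} = (\alpha-1)s_j + (\alpha-1)\lambda + \tfrac{1}{\alpha}$. Defining the single scalar threshold $\tau \defeq -(\alpha-1)\lambda - \tfrac{1}{\alpha}$ collapses this to $p_j^{\alpha-1} = (\alpha-1)s_j - \tau$, i.e.\ $p_j = \bigl((\alpha-1)s_j - \tau\bigr)^{1/(\alpha-1)}$. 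On coordinates with $p_j = 0$, substituting $p_j^{\alpha-1} = 0$ (legitimate for $\alpha > 1$) and imposing $\mu_j \ge 0$ yields $(\alpha-1)s_j - \tau \le 0$. Merging the two cases gives the hinge form $p_j = \bigl[(\alpha-1)s_j - \tau\bigr]_+^{1/(\alpha-1)}$, with $\tau$ the unique value enforcing $\sum_j p_j = 1$ --- precisely the definition of $\alpha$-entmax.

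I expect the main obstacle to be the one already visible in the Sparsemax proof: handling the boundary of the simplex correctly, i.e.\ arguing that $\Omega$ is $C^1$ up to the boundary for $\alpha > 1$ so that the interior stationarity equation and the boundary complementary-slackness inequality can be stitched into a single closed form, rather than naively differentiating and overlooking the active non-negativity constraints. A lighter secondary point is well-posedness of the threshold: the map $\tau \mapsto \sum_j [(\alpha-1)s_j - \tau]_+^{1/(\alpha-1)}$ is continuous and strictly decreasing where it is positive, sweeping from $+\infty$ down to $0$, so the normalizing $\tau$ is unique. Finally I would note the sanity checks that $\alpha = 2$ gives $\Omega(\bm{p}) = \tfrac12\|\bm{p}\|_2^2 - \tfrac12$ (Sparsemax, up to an additive constant) and $\alpha \to 1^+$ recovers the Shannon-entropy limit (softmax), consistent with the two preceding propositions.
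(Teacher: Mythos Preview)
Your proposal is correct and follows essentially the same Lagrangian/KKT route as the paper, but is in fact more careful: the paper writes the stationarity condition only for $p_k>0$ and then simply asserts that ``including the non-negativity constraint'' the solution takes the hinge form $p_k \propto (s_k-\tau)_+^{1/(\alpha-1)}$, whereas you explicitly introduce the multipliers $\mu_j$, invoke complementary slackness, and verify that the $p_j=0$ case yields the matching inequality $(\alpha-1)s_j - \tau \le 0$. Your additional remarks on $C^1$ regularity at the boundary, monotonicity/uniqueness of the normalizing threshold, and the $\alpha=2$ and $\alpha\to 1^+$ limits are all correct and go beyond what the paper records.
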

\begin{proof}
The objective function and Lagrangian are: 
\begin{align}
J(\bm{p}) &= -\sum_j p_j s_j + \frac{1}{\alpha(\alpha-1)} \sum_j (p_j^\alpha - p_j), \\
\mathcal{L}(\bm{p}, \lambda) &= J(\bm{p}) + \lambda(1 - \sum_j p_j).
\end{align}
This function is convex for \(\alpha>1\). The KKT stationarity condition for an element \(p_k > 0\), derived from the Lagrangian, is:
\begin{align}
    \frac{\partial \mathcal{L}}{\partial p_k} = -s_k + \frac{1}{\alpha(\alpha-1)} (\alpha p_k^{\alpha-1} - 1) - \lambda = 0.
\end{align}
We now solve this equation for \(p_k\). First, we isolate the term containing \(p_k\):
\begin{align}
    \frac{\alpha p_k^{\alpha-1} - 1}{\alpha(\alpha-1)} &= s_k + \lambda \\
    \alpha p_k^{\alpha-1} - 1 &= \alpha(\alpha-1)(s_k + \lambda) \\
    \alpha p_k^{\alpha-1} &= 1 + \alpha(\alpha-1)(s_k + \lambda) \\
    p_k^{\alpha-1} &= (\alpha-1)s_k + \left( (\alpha-1)\lambda + \frac{1}{\alpha} \right). \label{eq:alpha_entmax_linear_form}
\end{align}
Equation~\eqref{eq:alpha_entmax_linear_form} shows that \(p_k^{\alpha-1}\) is a linear function of the score \(s_k\). The term in the parenthesis is a constant with respect to the index \(k\). This entire expression can be simplified by defining a single threshold parameter \(\tau\) such that the right-hand side is proportional to \((s_k - \tau)\). Therefore, the solution for \(p_k\), including the non-negativity constraint, must take the form:
\begin{equation}
    p_k \propto (s_k - \tau)_+^{\frac{1}{\alpha-1}},
\end{equation}
where the threshold \(\tau\) and the constant of proportionality are chosen to ensure \(\bm{p}\) sums to $1$. This is the definition of the \(\alpha\)-entmax transformation.
\end{proof}

\paragraph{Introducing Structural Bias with Linear Penalties (ALiBi).} Distinct from replacing the regularizer, we can instead augment the Shannon entropy with an additional penalty term, allowing the model to learn distributions that are not only high-entropy and high-similarity but also respect a desired structural prior like locality.

\begin{proposition}[ALiBi as a Solution to Locality-Biased EOT]
Let the standard scaled-dot-product attention scores for a query at position \(i\) be given by the vector \(\bm{s}_i \in \R^m\), where \(s_{ij} = \ip{\bm{q}_i}{\bm{k}_j}\). Consider the variational problem with a regularizer \(\Omega_L(\bm{p}_i)\) that combines the standard Shannon entropy term with a linear penalty on attention distance:
\begin{equation}
    \Omega_L(\bm{p}_i) \defeq \tau \sum_{j=1}^m p_{ij} \log p_{ij} + \gamma \sum_{j=1}^m p_{ij} |i - j|,
\end{equation}
where \(\tau > 0\) is the temperature and \(\gamma \ge 0\) is a hyperparameter controlling the strength of the locality bias. The unique attention distribution \(\bm{p}_i^\star\) that solves the optimization problem
\begin{equation}
    \bm{p}_i^\star = \arg\min_{\bm{p}_i \in \ProbSimplex{m-1}} \left\{ -\sum_{j=1}^m p_{ij}s_{ij} + \Omega_L(\bm{p}_i) \right\}
\end{equation}
is given by applying a softmax function to logits that have been linearly penalized by their distance from the query:
\begin{equation}
    p_{ij}^\star = \frac{\exp\left( (s_{ij} - \gamma|i - j|) / \tau \right)}{\sum_{l=1}^{m} \exp\left( (s_{il} - \gamma|i - l|) / \tau \right)}.
\end{equation}
This is the functional form of the ALiBi (Attention with Linear Biases) mechanism, where ALiBi's linear bias \(m\) corresponds to \(-\gamma\).
\end{proposition}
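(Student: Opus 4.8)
The plan is to reduce this proposition to Theorem~\ref{thm:attention_as_eot} by absorbing the locality penalty into the cost vector. Observe that both the similarity term \(-\sum_j p_{ij} s_{ij}\) and the locality penalty \(\gamma\sum_j p_{ij}|i-j|\) are linear in \(\bm{p}_i\), so they combine into a single linear functional \(\sum_j p_{ij}\tilde{C}_j\) with effective cost \(\tilde{C}_j \defeq -\bigl(s_{ij} - \gamma|i-j|\bigr)\). The objective then reads \(\sum_j p_{ij}\tilde{C}_j - \tau H(\bm{p}_i)\), which is precisely the one-sided EOT functional \(J\) of Definition~\ref{def:eot_problem_attn} with regularization parameter \(\varepsilon = \tau\) and cost \(\tilde{\bm{C}}\) in place of \(\bm{C}\). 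In effect, the locality bias is nothing more than a deterministic shift of the score vector.

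First I would verify that the hypotheses of Theorem~\ref{thm:attention_as_eot} are met for the modified cost: \(\tilde{\bm{C}}\) is finite whenever \(\bm{q}_i\), \(\{\bm{k}_j\}\), and \(\gamma\) are finite, and strict convexity of the objective is unaffected because the extra penalty is linear, so the strictly convex term \(\tau\sum_j p_{ij}\log p_{ij}\) is retained verbatim; existence and uniqueness on the compact convex simplex then follow exactly as in the theorem. Invoking the closed form of Theorem~\ref{thm:attention_as_eot} gives \(p_{ij}^\star \propto \exp(-\tilde{C}_j/\tau) = \exp\bigl((s_{ij}-\gamma|i-j|)/\tau\bigr)\), and normalizing over \(j\) yields the displayed expression. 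As a self-contained alternative I would simply repeat the Lagrangian computation: impose the stationarity condition \(\partial\mathcal{L}/\partial p_{ij} = -s_{ij} + \gamma|i-j| + \tau(\log p_{ij}+1) - \lambda = 0\), solve for \(p_{ij} = \exp((s_{ij}-\gamma|i-j|)/\tau)\exp((\lambda-\tau)/\tau)\), and fix the constant via \(\sum_j p_{ij}=1\); this route makes explicit that the logarithmic barrier forces an interior optimum so the non-negativity constraints stay inactive.

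I do not anticipate a genuine obstacle here. The only point requiring care is not attributing strict convexity to the linear distance penalty itself — it comes entirely from the Shannon term — and confirming that adding a bounded linear term cannot push the minimizer to the boundary, which is again guaranteed by the entropic barrier. The last step is purely notational: matching the additive per-key bias \(-\gamma|i-j|\) against ALiBi's head-specific linear bias recovers the stated correspondence between ALiBi's slope parameter \(m\) and \(-\gamma\), completing the identification.
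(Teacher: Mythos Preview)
Your proposal is correct and follows essentially the same route as the paper: both absorb the linear locality penalty into an effective cost vector \(\tilde{C}_j = -s_{ij} + \gamma|i-j|\), observe that the resulting objective is exactly the one-sided EOT functional of Definition~\ref{def:eot_problem_attn}, and then invoke Theorem~\ref{thm:attention_as_eot} to read off the softmax solution. Your added remarks about where strict convexity genuinely comes from and why the optimum stays interior are sound and make explicit what the paper leaves implicit, but the core argument is identical.
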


\begin{proof}
The proof proceeds by reformulating the objective to match the structure of the original EOT problem from Definition~\ref{def:eot_problem_attn}. The problem is separable by query index \(i\), so we can analyze a single attention distribution \(\bm{p}_i\) without loss of generality. The full objective functional \(J(\bm{p}_i)\) is:
\begin{equation}
    J(\bm{p}_i) = -\sum_{j=1}^m p_{ij}s_{ij} + \left( \tau \sum_{j=1}^m p_{ij} \log p_{ij} + \gamma \sum_{j=1}^m p_{ij} |i-j| \right).
\end{equation}
We can regroup the terms that are linear in the optimization variable \(p_{ij}\):
\begin{align}
    J(\bm{p}_i) &= \sum_{j=1}^m \left( -p_{ij}s_{ij} + \gamma p_{ij}|i-j| \right) + \tau \sum_{j=1}^m p_{ij} \log p_{ij} \\
    &= \sum_{j=1}^m p_{ij} \left( -s_{ij} + \gamma|i-j| \right) + \tau \sum_{j=1}^m p_{ij} \log p_{ij}.
\end{align}
Let us define an "effective cost" vector \(\bm{C}'_i \in \R^m\) where each component \(C'_{ij}\) incorporates the locality penalty:
\begin{equation}
    C'_{ij} \defeq -s_{ij} + \gamma|i-j|.
\end{equation}
The objective functional now takes a familiar form:
\begin{equation}
    J(\bm{p}_i) = \sum_{j=1}^m p_{ij} C'_{ij} + \tau \sum_{j=1}^m p_{ij} \log p_{ij}.
\end{equation}
This is mathematically identical to the original entropic optimal transport problem presented in Definition~\ref{def:eot_problem_attn}, but with the cost \(\bm{C}\) replaced by the effective cost \(\bm{C}'\). As established in Theorem~\ref{thm:attention_as_eot}, the unique solution to such a problem is a Gibbs distribution (i.e., a softmax) over the negative costs, scaled by the temperature \(\tau\). By direct analogy, the solution \(\bm{p}_i^\star\) must be:
\begin{equation}
    p_{ij}^\star \propto \exp\left(\frac{-C'_{ij}}{\tau}\right) = \exp\left(\frac{-(-s_{ij} + \gamma|i-j|)}{\tau}\right) = \exp\left(\frac{s_{ij} - \gamma|i-j|}{\tau}\right).
\end{equation}
To satisfy the normalization constraint \(\sum_j p_{ij} = 1\), we introduce the normalization constant (the partition function), which results in the final softmax form:
\begin{equation}
    p_{ij}^\star = \frac{\exp\left( (s_{ij} - \gamma|i - j|) / \tau \right)}{\sum_{l=1}^{m} \exp\left( (s_{il} - \gamma|i - l|) / \tau \right)}.
\end{equation}
The term \(- \gamma|i-j|\) is an additive penalty applied to the logits \(s_{ij}\) before the softmax operation. In the ALiBi paper, this is written as \(m|i-j|\), where the bias \(m\) is typically negative to encourage locality. Our parameter \(\gamma \ge 0\) thus corresponds to \(m = -\gamma\), providing a first-principles derivation for the ALiBi mechanism from a locality-regularized optimal transport objective.
\end{proof}

Thus, the choice of the regularizer $\Omega(\bm{p})$ is revealed to be the central design choice, capable of inducing properties ranging from sparsity to structural bias. We summarize the consequence of the choice of regularizer on the resulting distribution in Table \ref{tab:general_framework_summary_corrected}.

\begin{table}[h!]
\centering
\caption{SDPA mechanisms derived from various regularizers.}
\label{tab:general_framework_summary_corrected}
\begin{tabularx}{\textwidth}{l l X} 
\toprule
\textbf{Mechanism} & \textbf{Regularizer \(\Omega(\bm{p})\)} & \textbf{Result / Key Property} \\
\midrule
Softmax & \(- \tau H(\bm{p})\) & Dense, smooth distribution from Shannon Entropy maximization. \\
\addlinespace
Sparsemax & \(\displaystyle\frac{1}{2} \sum_j p_j^2\) & Sparse distribution with exact zeros from L2 regularization. \\
\addlinespace
\(\alpha\)-entmax & \(\displaystyle\frac{1}{\alpha(\alpha-1)} \sum_j (p_j^\alpha - p_j)\) & Sparsity controlled by \(\alpha\), derived from Tsallis Entropy. \\
\addlinespace
ALiBi & \(- \tau H(\bm{p}) + \gamma \sum_j p_j |i - j|\) & Introduces a structural locality bias via a linear penalty. \\
\bottomrule
\end{tabularx}
\end{table}

\section{The Backward Pass as an Optimal Policy Update}
\label{sec:gradient_as_policy}

Having established a variational interpretation for the forward pass, we now turn to the backward pass to analyze the learning dynamics. We prove that the gradient signal used to train the query and key representations via backpropagation has the exact form of a sophisticated, variance-reduced policy gradient update from reinforcement learning.

\begin{definition}[Context Vector and Marginal Utility]
In a standard attention layer, the computed attention weights \(\bm{p} \in \ProbSimplex{m-1}\) are used to form a weighted sum of a set of "value" vectors \(\{\bm{v}_j\}_{j=1}^m \subseteq \R^{d_v}\). The resulting output is the context vector \(\bm{c} \in \R^{d_v}\):
\begin{equation}
    \bm{c}(\bm{p}) = \sum_{j=1}^m p_j \bm{v}_j.
\end{equation}
Let \(\mathcal{L}(\bm{c})\) be a differentiable scalar loss function that is minimized during training. We define the \emph{marginal utility} vector \(\bm{u} \in \R^m\) as the negative partial derivative of the loss with respect to each weight component \(p_j\). This quantity represents the utility or "reward" for assigning more weight to key \(j\):
\begin{equation}
    u_j \defeq -\frac{\partial \mathcal{L}}{\partial p_j} = -\left\langle \nabla_{\bm{c}} \mathcal{L}, \frac{\partial \bm{c}}{\partial p_j} \right\rangle = -\langle \nabla_{\bm{c}} \mathcal{L}, \bm{v}_j \rangle.
\end{equation}
\end{definition}

\begin{theorem}[Attention Gradient as an Advantage-Based Policy Update]
\label{thm:gradient_as_advantage_main}
Let \(\bm{p}^\star = \bm{\alpha}\) be the attention distribution generated from scores \(\bm{s}\) with temperature \(\tau\). Let \(\mathcal{L}\) be a downstream loss and \(\bm{u}\) be the marginal utility vector as defined above. The gradient of the loss with respect to an individual score \(s_j\) is given by:
\begin{equation}
    \frac{\partial \mathcal{L}}{\partial s_j} = -\frac{p^\star_j}{\tau} \left( u_j - \E_{\bm{p}^\star}[\bm{u}] \right),
\end{equation}
where 
\begin{align}
\E_{\bm{p}^\star}[\bm{u}] \defeq \sum_{k=1}^m p^\star_k u_k
\end{align}
is the expected marginal utility under the current attention policy $\bm{p}^\star$.
\end{theorem}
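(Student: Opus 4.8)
The plan is to compute $\partial\mathcal{L}/\partial s_j$ by the chain rule, exploiting that $\mathcal{L}$ depends on the scores $\bm{s}$ only through the attention distribution $\bm{p}^\star(\bm{s})$. Writing $\mathcal{L}(\bm{s}) = \mathcal{L}\big(\bm{c}(\bm{p}^\star(\bm{s}))\big)$ and inserting the definition $u_k = -\partial\mathcal{L}/\partial p_k$ of the marginal utility, we immediately get
\begin{equation}
\frac{\partial\mathcal{L}}{\partial s_j} = \sum_{k=1}^m \frac{\partial\mathcal{L}}{\partial p_k^\star}\,\frac{\partial p_k^\star}{\partial s_j} = -\sum_{k=1}^m u_k\,\frac{\partial p_k^\star}{\partial s_j},
\end{equation}
so the statement reduces entirely to knowing the Jacobian of the temperature-scaled softmax.

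Second, I would derive that Jacobian. Since $p_k^\star = e^{s_k/\tau}/Z$ with $Z = \sum_{l=1}^m e^{s_l/\tau}$, differentiating $\log p_k^\star = s_k/\tau - \log Z$ and using $\partial(\log Z)/\partial s_j = p_j^\star/\tau$ gives, after separating the $k=j$ and $k\neq j$ cases,
\begin{equation}
\frac{\partial p_k^\star}{\partial s_j} = \frac{1}{\tau}\,p_k^\star\big(\delta_{kj} - p_j^\star\big),
\end{equation}
where $\delta_{kj}$ is the Kronecker delta. Substituting this into the chain-rule expression and collecting the two resulting terms yields
\begin{align}
\frac{\partial\mathcal{L}}{\partial s_j} &= -\frac{1}{\tau}\sum_{k=1}^m u_k\,p_k^\star(\delta_{kj} - p_j^\star) \\
&= -\frac{1}{\tau}\Big(p_j^\star u_j - p_j^\star\sum_{k=1}^m p_k^\star u_k\Big) \\
&= -\frac{p_j^\star}{\tau}\big(u_j - \E_{\bm{p}^\star}[\bm{u}]\big),
\end{align}
which is the claimed identity, with the subtracted term $\E_{\bm{p}^\star}[\bm{u}] = \sum_{k} p_k^\star u_k$ acting as a baseline so that $u_j - \E_{\bm{p}^\star}[\bm{u}]$ is a genuine advantage.

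I expect no serious technical obstacle here: the computation is elementary once the chain rule is applied. The only place demanding care is the bookkeeping in the softmax Jacobian — getting the $\delta_{kj}$ term and the overall factor $1/\tau$ right — together with an explicit justification that the entropic regularizer keeps $\bm{p}^\star$ in the simplex interior so the Jacobian is well defined everywhere. The substantive content of the theorem is interpretive rather than computational: recognizing that the algebra forces the raw reward $u_j$ to be re-centered by its policy mean, i.e. that backpropagation is structurally a REINFORCE-with-baseline update. As a sanity check I would note $\sum_{j} p_j^\star(u_j - \E_{\bm{p}^\star}[\bm{u}]) = 0$, consistent with the softmax being invariant under adding a constant to all scores.
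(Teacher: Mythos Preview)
Your proposal is correct and follows essentially the same route as the paper's proof: chain rule through $\bm{p}^\star$, compute the softmax Jacobian $\partial p_k^\star/\partial s_j = \tfrac{1}{\tau}p_k^\star(\delta_{kj}-p_j^\star)$, substitute, and collect terms. The only cosmetic difference is that you obtain the Jacobian by differentiating $\log p_k^\star$ while the paper uses the quotient rule with explicit case analysis; your added sanity check $\sum_j p_j^\star(u_j-\E_{\bm{p}^\star}[\bm{u}])=0$ is a nice touch not present in the original.
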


\begin{proof}
The proof requires a careful application of the chain rule and the derivation of the Jacobian matrix of the softmax function. The gradient of the loss \(\mathcal{L}\) with respect to the score \(s_j\) is given by the chain rule:
\begin{equation}
    \frac{\partial \mathcal{L}}{\partial s_j} = \sum_{k=1}^m \frac{\partial \mathcal{L}}{\partial p_k} \frac{\partial p_k}{\partial s_j} = \sum_{k=1}^m (-u_k) \frac{\partial p_k}{\partial s_j} = -\sum_{k=1}^m u_k \frac{\partial p_k}{\partial s_j}.
\end{equation}
Our next step is to compute the partial derivatives \(\frac{\partial p_k}{\partial s_j}\), which are the entries of the softmax Jacobian. Let 
\begin{align}
p_k(\bm{s}) = \exp \bigg(\frac{s_k}{\tau} \bigg) / Z, 
\end{align}
where 
\begin{align}
Z = \sum_{l=1}^m \exp \bigg(\frac{s_l}{\tau} \bigg).
\end{align}
We consider two cases.

\noindent \textbf{Case 1: \(k = j\)}. We compute the derivative \(\frac{\partial p_j}{\partial s_j}\) using the quotient rule:
\begin{align}
    \frac{\partial p_j}{\partial s_j} &= \frac{(\frac{\partial}{\partial s_j}\exp(s_j/\tau)) \cdot Z - \exp(s_j/\tau) \cdot (\frac{\partial Z}{\partial s_j})}{Z^2} \\
    &= \frac{(\frac{1}{\tau}\exp(s_j/\tau)) \cdot Z - \exp(s_j/\tau) \cdot (\frac{1}{\tau}\exp(s_j/\tau))}{Z^2} \\
    &= \frac{\frac{1}{\tau}\exp(s_j/\tau)}{Z} \left( 1 - \frac{\exp(s_j/\tau)}{Z} \right) \\
    &= \frac{1}{\tau} p_j (1 - p_j).
\end{align}

\noindent \textbf{Case 2: \(k \neq j\)}. The numerator \(\exp(s_k/\tau)\) does not depend on \(s_j\), so its derivative is zero.
\begin{align}
    \frac{\partial p_k}{\partial s_j} &= \frac{(\frac{\partial}{\partial s_j}\exp(s_k/\tau)) \cdot Z - \exp(s_k/\tau) \cdot (\frac{\partial Z}{\partial s_j})}{Z^2} \\
    &= \frac{0 - \exp(s_k/\tau) \cdot (\frac{1}{\tau}\exp(s_j/\tau))}{Z^2} \\
    &= -\frac{1}{\tau} \frac{\exp(s_k/\tau)}{Z} \frac{\exp(s_j/\tau)}{Z} \\
    &= -\frac{1}{\tau} p_k p_j.
\end{align}
We can unify these two cases using the Kronecker delta, \(\delta_{kj}\), which is 1 if \(k=j\) and 0 otherwise:
\begin{equation}
    \frac{\partial p_k}{\partial s_j} = \frac{1}{\tau} p_k (\delta_{kj} - p_j).
\end{equation}
Now, we substitute this Jacobian expression back into the chain rule summation:
\begin{align}
    \frac{\partial \mathcal{L}}{\partial s_j} &= -\sum_{k=1}^m u_k \left( \frac{1}{\tau} p_k (\delta_{kj} - p_j) \right) \\
    &= -\frac{1}{\tau} \sum_{k=1}^m u_k p_k (\delta_{kj} - p_j) \\
    &= -\frac{1}{\tau} \left( \sum_{k=1}^m u_k p_k \delta_{kj} - \sum_{k=1}^m u_k p_k p_j \right).
\end{align}
The first term in the parenthesis simplifies because \(\delta_{kj}\) is only non-zero when \(k=j\), so the summation collapses to a single term:
\begin{align}
    \frac{\partial \mathcal{L}}{\partial s_j} &= -\frac{1}{\tau} \left( u_j p_j - p_j \sum_{k=1}^m u_k p_k \right).
\end{align}
Factoring out \(p_j\) and recognizing that \(\sum_{k=1}^m p_k u_k\) is the definition of the expected utility \(\E_{\bm{p}}[\bm{u}]\), we arrive at the final expression (using \(p_j = p^\star_j\)):
\begin{equation}
    \frac{\partial \mathcal{L}}{\partial s_j} = -\frac{p^\star_j}{\tau} \left( u_j - \sum_{k=1}^m p^\star_k u_k \right) = -\frac{p^\star_j}{\tau} \left( u_j - \E_{\bm{p}^\star}[\bm{u}] \right).
\end{equation}
This completes the proof.
\end{proof}

\begin{remark}[Learning via an Optimal Control Signal]
Section~\ref{sec:forward_pass} revealed the forward pass as Optimal Transport. Theorem \ref{thm:gradient_as_advantage_main} proves that standard backpropagation, when applied to the scaled-dot-product attention mechanism, is not a naive learning rule but implicitly implements a sophisticated and rational Optimal Control strategy. The term \(A_j \defeq u_j - \E_{\bm{p}^\star}[\bm{u}]\) is precisely the advantage of selecting key \(j\), which measures how much better or worse that key's marginal utility is compared to the average utility of the current attention policy. The gradient update for a score \(s_j\) is proportional to this advantage. This is the exact form of the REINFORCE algorithm with an expected value baseline, a standard and powerful technique used in reinforcement learning to reduce the variance of policy gradient estimates and stabilize training. The learning signal encourages increasing the score \(s_j\) if its associated value vector yields an above-average utility (\(A_j > 0\)), and decreasing it if it yields a below-average utility (\(A_j < 0\)).
\end{remark}

\section{The Dual Problem \& The Log-Sum-Exp Potential}
\label{sec:geometry_and_potential}

In Section~\ref{sec:forward_pass}, we established that the attention distribution $\bm{p}^\star$ is the solution to an EOT problem, answering \emph{what} attention computes. Subsequently, in Section~\ref{sec:gradient_as_policy} we showed that its learning gradient is equivalent to a sophisticated advantage-based policy update, revealing \emph{how} it learns. The elegance of this learning rule seems too perfect to be an accident of the softmax derivative. We now pose the following question: what is the underlying structure that governs this computation and its learning dynamics?

To answer this, we must shift our focus from the solution \(\bm{p}^\star\) itself to the optimal value of the optimization problem. This value, viewed as a function of the input scores \(\bm{s}\), defines a landscape. We will prove that the gradient of this landscape is the attention distribution, and its curvature defines the natural geometry for learning. This will reveal the Log-Sum-Exp function as the fundamental scalar potential that governs SDPA.

\subsection{The Primal Value Function and Its Gradient}

We begin by defining the most natural scalar object associated with our optimization problem: its optimal value.

\begin{definition}[Primal Value Function]
Let the primal objective from Eq.~\eqref{eq:eot_objective_main} be denoted
\begin{align}
J(\bm{p}, \bm{s}) = -\ip{\bm{p}}{\bm{s}} + \tau \sum_j p_j \log p_j.
\end{align}
The \textbf{primal value function}, \(V(\bm{s})\), is the minimum value of this objective over the probability simplex:
\begin{equation}
    V(\bm{s}) \defeq \min_{\bm{p} \in \ProbSimplex{m-1}} J(\bm{p}, \bm{s}).
\end{equation}
\end{definition}
The function \(V(\bm{s})\) represents the best possible trade-off between maximizing similarity and maximizing entropy for a given set of scores \(\bm{s}\). Its gradient, \(\nabla_{\bm{s}}V(\bm{s})\), tells us how sensitive this optimal value is to changes in the scores. In optimization theory, the Envelope Theorem provides a simple formula for this gradient.

\begin{tcolorbox}[
  breakable,
  enhanced,               
  pad at break=2mm,
  colback=black!5!white,
  colframe=teal!75!black,
  fonttitle=\bfseries,
  title={Primer: The Envelope Theorem},
  title after break={Primer: The Envelope Theorem (continued)}
]
    Consider an optimization problem whose value depends on a parameter, 
    \begin{align*}
    V(\bm{s}) = \min_{\bm{p}} f(\bm{p}, \bm{s}). 
    \end{align*}
    Let \(\bm{p}^\star(\bm{s})\) be the solution that achieves this minimum. The \emph{Envelope Theorem} states that the gradient of the value function is simply the partial gradient of the objective function with respect to the parameter, evaluated at the optimal solution:
    \begin{align*}
        \nabla_{\bm{s}} V(\bm{s}) = \nabla_{\bm{s}} f(\bm{p}^\star(\bm{s}), \bm{s}).
    \end{align*}
    \paragraph{Intuition.} At an optimum, the objective function is stationary ("flat") with respect to the solution variable $\bm{p}$. Consequently, we can ignore the indirect effect of the parameter changing the solution, as this effect has a negligible (second-order) impact on the value.
\end{tcolorbox}

\begin{lemma}[Gradient of the Primal Value Function]
\label{lem:envelope_theorem_result}
The gradient of the primal value function \(V(\bm{s})\) with respect to the scores \(\bm{s}\) is the negative of the optimal attention distribution \(\bm{p}^\star(\bm{s})\):
\begin{equation}
    \nabla_{\bm{s}}V(\bm{s}) = -\bm{p}^\star(\bm{s}).
\end{equation}
\end{lemma}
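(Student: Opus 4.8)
The plan is to invoke the Envelope Theorem from the primer, for which all hypotheses are already in hand. Write $J(\bm{p},\bm{s}) = -\ip{\bm{p}}{\bm{s}} + \tau\sum_j p_j \log p_j$ and observe the essential structural point: the feasible set $\ProbSimplex{m-1}$ does \emph{not} depend on the parameter $\bm{s}$, and by Theorem~\ref{thm:attention_as_eot} the minimizer $\bm{p}^\star(\bm{s})$ exists, is unique, and equals the softmax $p^\star_j(\bm{s}) = \exp(s_j/\tau)/\sum_l \exp(s_l/\tau)$ — in particular it is a real-analytic (hence $C^1$) function of $\bm{s}$ lying in the interior of the simplex, so the regularity conditions for the Envelope Theorem hold.

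Next I would apply the theorem directly: $\nabla_{\bm{s}}V(\bm{s}) = \nabla_{\bm{s}} J(\bm{p},\bm{s})\big|_{\bm{p}=\bm{p}^\star(\bm{s})}$. Computing the partial derivative of $J$ with respect to $s_j$ while holding $\bm{p}$ fixed, the entropy term $\tau\sum_k p_k\log p_k$ contributes nothing since it has no $\bm{s}$-dependence, and $\partial(-\ip{\bm{p}}{\bm{s}})/\partial s_j = -p_j$. Evaluating at the optimum gives $\partial V/\partial s_j = -p^\star_j(\bm{s})$, i.e. $\nabla_{\bm{s}}V(\bm{s}) = -\bm{p}^\star(\bm{s})$, as claimed.

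As an independent cross-check (and to set up the Log-Sum-Exp picture promised by the section), I would also substitute the closed form $\bm{p}^\star$ back into $J$ and collapse it to $V(\bm{s}) = -\tau\log\sum_{l=1}^m \exp(s_l/\tau)$, the negative temperature-scaled Log-Sum-Exp of the scores; differentiating this scalar potential componentwise reproduces $\partial V/\partial s_j = -\tau \cdot \frac{(1/\tau)\exp(s_j/\tau)}{\sum_l \exp(s_l/\tau)} = -p^\star_j$, matching the envelope computation.

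The only genuine subtlety — the step I would be most careful about — is justifying that the equality constraint $\sum_j p_j = 1$ does not contribute an extra term to $\nabla_{\bm{s}}V$. In the constrained form of the Envelope Theorem one differentiates the Lagrangian $\mathcal{L}(\bm{p},\lambda,\bm{s}) = J(\bm{p},\bm{s}) + \lambda(1-\sum_j p_j)$ at $(\bm{p}^\star,\lambda^\star)$; since the constraint term is independent of $\bm{s}$, $\nabla_{\bm{s}}\mathcal{L} = \nabla_{\bm{s}}J$, so no correction appears. Alternatively, one simply bypasses the issue entirely by using the explicit closed form $V(\bm{s}) = -\tau\log\sum_l\exp(s_l/\tau)$ and differentiating elementary functions. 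I would state the clean envelope argument in the main text and note the explicit-computation route as the reader's reassurance.
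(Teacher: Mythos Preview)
Your proposal is correct and follows essentially the same envelope-theorem argument as the paper: differentiate $J(\bm{p},\bm{s})$ in $\bm{s}$ holding $\bm{p}$ fixed, observe the entropy term vanishes, and evaluate at $\bm{p}^\star$. Your additional care with regularity conditions and the constraint term, together with the explicit Log-Sum-Exp cross-check, go beyond what the paper includes here (the LSE computation in fact anticipates Theorem~\ref{thm:lse_as_dual_potential}), but the core argument is identical.
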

\begin{proof}
By the Envelope Theorem, we can compute the gradient of \(V(\bm{s})\) by differentiating the objective \(J(\bm{p}, \bm{s})\) with respect to \(\bm{s}\) while treating the optimal solution \(\bm{p}^\star\) as a constant.
\begin{align}
    \nabla_{\bm{s}}V(\bm{s}) &= \nabla_{\bm{s}} J(\bm{p}, \bm{s})\bigg|_{\bm{p}=\bm{p}^\star} \\
    &= \nabla_{\bm{s}} \left[-\sum_j p_j s_j + \tau\sum_j p_j \log p_j\right]\bigg|_{\bm{p}=\bm{p}^\star}.
\end{align}
The second term does not depend on \(\bm{s}\), so its gradient is zero. The gradient of the first term with respect to the vector \(\bm{s}\) is simply \(-\bm{p}\). Evaluating at \(\bm{p}^\star\), we get:
\begin{align}
    \nabla_{\bm{s}}V(\bm{s}) = -\bm{p}^\star(\bm{s}).
\end{align}
This completes the proof.
\end{proof}
This reveals that the value function \(V(\bm{s})\) acts as a potential function\footnote{In physics and mathematics, a potential function is a scalar field whose gradient determines a vector field of interest.} for the negative attention distribution. Its landscape contains all the information about the optimal solution.

\subsection{Constructing the Dual Potential via Lagrangian Duality}

Lemma~\ref{lem:envelope_theorem_result} is nearly perfect. However, for geometric and notational convenience, it is standard to work with a potential whose gradient is the distribution of interest itself, not its negative. This motivates our next goal: to find a conjugate potential \(\phi(\bm{s})\) such that \(\nabla_{\bm{s}}\phi(\bm{s}) = \bm{p}^\star(\bm{s})\). This implies that \(\phi(\bm{s})\) must be the negative of the primal value function, \(\phi(\bm{s}) = -V(\bm{s})\) (up to an additive constant).

The formal mathematical machinery for constructing such a conjugate function from a primal optimization problem is Lagrangian duality. We now use this framework to derive the potential \(\phi(\bm{s})\).

\begin{tcolorbox}[
    breakable,
    skin=enhanced,
    pad at break=2mm,
    colback=black!5!white,
    colframe=teal!75!black,
    fonttitle=\bfseries,
    title=Primer: The Primal and Dual Problems,
    title after break = {Primer: The Primal and Dual Problems (continued)}
]
Lagrangian duality reframes a constrained minimization problem (the \emph{primal problem}) into a corresponding maximization problem (the \emph{dual problem}). The optimal value of the dual provides a lower bound on the optimal value of the primal. For convex problems, this bound is tight, a property called \emph{strong duality}.

\subsection*{The Primal Problem}
A general constrained optimization problem can be written in the following standard form:
\[
\begin{aligned}
& \underset{\bm{p} \in \mathcal{D}}{\text{minimize}}
& & f(\bm{p}) \\
& \text{subject to}
& & c_i(\bm{p}) \le 0, \quad i = 1, \dots, m \\
& & & h_j(\bm{p}) = 0, \quad j = 1, \dots, k
\end{aligned}
\]
where \(f(\bm{p})\) is the objective function, and \(c_i(\bm{p})\) and \(h_j(\bm{p})\) are the inequality and equality constraint functions, respectively. Let the optimal value of this problem be \(p^*\).

\subsection*{Constructing the Dual Problem}
\begin{enumerate}
    \item \textbf{The Lagrangian:} We incorporate the constraints into the objective by defining the Lagrangian \(\mathcal{L}(\bm{p}, \bm{\lambda}, \bm{\nu})\). This is done by introducing one Lagrange multiplier \(\lambda_i\) for each inequality constraint and one multiplier \(\nu_j\) for each equality constraint. These multipliers are the \emph{dual variables}.
    \[
    \mathcal{L}(\bm{p}, \bm{\lambda}, \bm{\nu}) \coloneqq f(\bm{p}) + \sum_{i=1}^m \lambda_i c_i(\bm{p}) + \sum_{j=1}^k \nu_j h_j(\bm{p})
    \]
    \item \textbf{The Lagrange Dual Function:} We define the Lagrange dual function \(g(\bm{\lambda}, \bm{\nu})\) as the infimum (greatest lower bound) of the Lagrangian over the primal variable \(\bm{p}\).
    \[
    g(\bm{\lambda}, \bm{\nu}) \coloneqq \inf_{\bm{p} \in \mathcal{D}} \mathcal{L}(\bm{p}, \bm{\lambda}, \bm{\nu})
    \]
    A key property is that \(g\) is always a concave function of \((\bm{\lambda}, \bm{\nu})\), regardless of the convexity of the primal problem. This makes the dual problem a convex optimization problem.

    \item \textbf{The Dual Problem:} The dual problem is to maximize the dual function with respect to the dual variables, subject to non-negativity constraints on the multipliers for the inequality constraints.
    \[
    \begin{aligned}
    & \underset{\bm{\lambda}, \bm{\nu}}{\text{maximize}}
    & & g(\bm{\lambda}, \bm{\nu}) \\
    & \text{subject to}
    & & \bm{\lambda} \ge \bm{0}
    \end{aligned}
    \]
\end{enumerate}

\subsection*{Weak and Strong Duality}
Let \(d^*\) be the optimal value of the dual problem. \textbf{Weak duality}, which always holds, states that the dual optimal value is a lower bound on the primal optimal value: \(d^* \le p^*\). The difference \(p^* - d^*\) is the \emph{duality gap}.

For convex optimization problems that satisfy certain constraint qualifications (like Slater's condition), the duality gap is zero. This is the property of \textbf{strong duality}:
\[
d^* = p^*.
\]
The EOT problem in this paper is convex and satisfies these conditions. Therefore, strong duality holds, allowing us to find the primal optimal value by solving the dual problem, which reveals the governing Log-Sum-Exp potential.
\end{tcolorbox}

\begin{tcolorbox}[
    breakable,
    skin=enhanced,
    pad at break=2mm,
    colback=black!5!white,
    colframe=teal!75!black,
    fonttitle=\bfseries,
    title=Primer: Karush-Kuhn-Tucker (KKT) Conditions,
    title after break = {Primer: Karush-Kuhn-Tucker (KKT) Conditions (continued)}
]
For differentiable problems where strong duality holds, any pair of primal optimal points \(\bm{p}^*\) and dual optimal points \((\bm{\lambda}^*, \bm{\nu}^*)\) must satisfy the Karush-Kuhn-Tucker (KKT) conditions. These provide a complete characterization of optimality.
\begin{enumerate}
    \item \textbf{Stationarity:} The gradient of the Lagrangian with respect to the primal variable is zero at the optimal point.
    \[ \nabla_{\bm{p}} \mathcal{L}(\bm{p}^*, \bm{\lambda}^*, \bm{\nu}^*) = 0 \]
    \item \textbf{Primal Feasibility:} The optimal primal solution must satisfy all original constraints.
    \[ c_i(\bm{p}^*) \le 0, \quad h_j(\bm{p}^*) = 0 \quad \forall i, j \]
    \item \textbf{Dual Feasibility:} The optimal Lagrange multipliers for the inequality constraints are non-negative.
    \[ \lambda_i^* \ge 0 \quad \forall i \]
    \item \textbf{Complementary Slackness:} For each inequality constraint, either the constraint is active (equal to zero) or its corresponding multiplier is zero.
    \[ \lambda_i^* c_i(\bm{p}^*) = 0 \quad \forall i \]
\end{enumerate}
\end{tcolorbox}

We now apply this framework to find our potential function.

\begin{theorem}[The Log-Sum-Exp Function as the Optimal Dual Potential]
\label{thm:lse_as_dual_potential}
The optimal value of the dual of the EOT problem from Definition~\ref{def:eot_problem_attn} defines a potential function \(\phi^\star(\bm{s})\) given by the Log-Sum-Exp (LSE) function:
\begin{equation}
    \phi^\star(\bm{s}) = \tau \log \left( \sum_{l=1}^m \exp\left(\frac{s_l}{\tau}\right) \right).
\end{equation}
This potential's gradient is the attention distribution, \(\nabla_{\bm{s}}\phi^\star(\bm{s}) = \bm{p}^\star(\bm{s})\).
\end{theorem}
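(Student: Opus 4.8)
The plan is to construct the Lagrangian dual of the primal EOT problem exactly as laid out in the duality primer, compute its optimal value in closed form, and identify the negative of that value as the claimed potential. The objective $J(\bm{p},\bm{s}) = -\ip{\bm{p}}{\bm{s}} + \tau\sum_j p_j\log p_j$ is finite precisely on the nonnegative orthant (with $0\log 0 = 0$), and the entropic term is a barrier there, since $\partial_{p_j}(\tau p_j\log p_j)\to-\infty$ as $p_j\to 0^+$. Hence the inner minimization over $\bm{p}$ is automatically attained in the open orthant, so I only need to attach a single multiplier $\lambda\in\R$ to the normalization constraint $\sum_j p_j = 1$ and may leave the nonnegativity constraints implicit in the domain. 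Following the sign convention used in the proof of Theorem~\ref{thm:attention_as_eot}, the Lagrangian is $\mathcal{L}(\bm{p},\lambda) = -\ip{\bm{p}}{\bm{s}} + \tau\sum_j p_j\log p_j + \lambda(1-\sum_j p_j)$, which is separable across coordinates.

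First I would form the dual function $g(\lambda) = \inf_{\bm{p}>0}\mathcal{L}(\bm{p},\lambda)$. By separability, each scalar subproblem minimizes the strictly convex map $p_j\mapsto(-s_j-\lambda)p_j + \tau p_j\log p_j$, whose unique stationary point is $p_j(\lambda) = \exp\!\big((s_j+\lambda)/\tau - 1\big)$; substituting back collapses that coordinate's contribution to $-\tau p_j(\lambda)$, giving $g(\lambda) = \lambda - \tau e^{\lambda/\tau - 1}\sum_j e^{s_j/\tau}$. As the primer guarantees, $g$ is concave, and here it is smooth and coercive (it tends to $-\infty$ at both ends), so its maximizer solves $g'(\lambda^\star) = 1 - e^{\lambda^\star/\tau-1}\sum_j e^{s_j/\tau} = 0$. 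This yields $\lambda^\star = \tau\big(1 - \log\sum_j e^{s_j/\tau}\big)$, and since $e^{\lambda^\star/\tau-1}\sum_j e^{s_j/\tau}=1$ the dual optimal value is $d^\star = g(\lambda^\star) = \lambda^\star - \tau = -\tau\log\sum_j e^{s_j/\tau}$.

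Next I would invoke strong duality: the primal is convex, its only constraint is the affine normalization, and strictly feasible points exist (the uniform distribution lies in the relative interior of $\ProbSimplex{m-1}$), so Slater's condition holds and $d^\star = V(\bm{s})$. Adopting the convention fixed earlier in the section — a potential whose gradient is the distribution itself rather than its negative — I define $\phi^\star(\bm{s})\defeq -d^\star = -V(\bm{s}) = \tau\log\big(\sum_{l=1}^m e^{s_l/\tau}\big)$, which is exactly the Log-Sum-Exp function. For the gradient identity I have two independent routes: differentiating $\phi^\star$ directly gives $\partial_{s_j}\phi^\star = \tau\cdot\frac{\tau^{-1}e^{s_j/\tau}}{\sum_l e^{s_l/\tau}} = \alpha_j = p^\star_j$; alternatively, Lemma~\ref{lem:envelope_theorem_result} already gives $\nabla_{\bm{s}}V(\bm{s}) = -\bm{p}^\star(\bm{s})$, so $\nabla_{\bm{s}}\phi^\star(\bm{s}) = -\nabla_{\bm{s}}V(\bm{s}) = \bm{p}^\star(\bm{s})$. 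As a consistency check I would also substitute the known optimizer $\bm{p}^\star = \bm{\alpha}$ from Theorem~\ref{thm:attention_as_eot} directly into $J$, which should reproduce $V(\bm{s}) = -\tau\log\sum_l e^{s_l/\tau}$.

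The computation is essentially routine; the only points needing care are bookkeeping ones. The main obstacle is justifying cleanly that the nonnegativity constraints can be left out of the dualization — i.e., that the barrier behavior of $x\log x$ at the origin forces the inner infimum to be attained in the interior of the orthant so that no extra multipliers $\mu_j$ are required — together with keeping the sign conventions straight among the primal value $V$, the dual value $d^\star$, and the potential $\phi^\star = -d^\star$, since a slip there would flip LSE to $-$LSE or the softmax gradient to its negative. Checking Slater's condition is immediate given that the lone constraint is affine and $\ProbSimplex{m-1}$ has nonempty relative interior.
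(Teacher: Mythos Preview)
Your proof is correct and uses the same Lagrangian duality framework as the paper, but you carry out the dual construction more explicitly. The paper takes a shortcut: it reparameterizes the multiplier as $\phi \defeq \tau - \lambda$, rewrites the KKT stationarity condition as $p_j = \exp((s_j - \phi)/\tau)$, and then solves the primal feasibility constraint $\sum_j p_j = 1$ directly for $\phi^\star$, never explicitly forming or maximizing the dual function $g(\lambda)$. You instead follow the primer's recipe to the letter: compute $g(\lambda)$ in closed form, maximize it to obtain $d^\star = -\tau\log\sum_j e^{s_j/\tau}$, invoke Slater's condition to conclude $d^\star = V(\bm{s})$, and only then set $\phi^\star = -d^\star$. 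Your route is slightly longer but more directly justifies the theorem's phrasing that $\phi^\star$ is determined by ``the optimal value of the dual,'' and it makes the appeal to strong duality explicit rather than implicit. Both proofs verify $\nabla_{\bm{s}}\phi^\star = \bm{p}^\star$ by direct differentiation; your additional cross-check via Lemma~\ref{lem:envelope_theorem_result} is a nice redundancy.
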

\begin{proof}
We construct the Lagrangian for the primal problem by introducing a multiplier \(\lambda\) for the constraint \(\sum_j p_j=1\):
\begin{equation}
    \mathcal{L}(\bm{p}, \lambda) = -\sum_{j=1}^m p_j s_j + \tau \sum_{j=1}^m p_j \log p_j + \lambda \left(1 - \sum_{j=1}^m p_j \right).
\end{equation}
The dual approach seeks the optimal multiplier \(\lambda^\star\). To make its role more interpretable, we perform a reparameterization, defining a potential \(\phi\) as an affine transformation of \(\lambda\):
\begin{equation}
\phi \defeq \tau - \lambda \quad \iff \quad \lambda = \tau - \phi.
\end{equation}
The stationarity condition \(\partial\mathcal{L}/\partial p_j = 0\) from Eq.~\eqref{KKT} can now be expressed in terms of \(\phi\):
\begin{align}
    -s_j + \tau(\log p_j + 1) - (\tau - \phi) = 0 \implies p_j = \exp\left(\frac{s_j - \phi}{\tau}\right).
\end{align}
This recasts the abstract multiplier into a physically meaningful baseline potential \(\phi\) against which scores \(s_j\) are compared. The optimal potential \(\phi^\star\) is the one that makes the resulting \(\bm{p}\) satisfy the primal feasibility constraint \(\sum_j p_j = 1\). We enforce this constraint to solve for \(\phi^\star\):
\begin{align}
    \sum_{j=1}^m \exp\left( \frac{s_j - \phi^\star}{\tau} \right) &= 1 \\
    \exp\left(-\frac{\phi^\star}{\tau}\right) \sum_{j=1}^m \exp\left(\frac{s_j}{\tau}\right) &= 1.
\end{align}
Solving for \(\phi^\star\) yields the LSE function:
\begin{equation}
    \phi^\star(\bm{s}) = \tau \log\left(\sum_{j=1}^m \exp\left(\frac{s_j}{\tau}\right)\right).
\end{equation}
Finally, we confirm that the gradient of this potential is \(\bm{p}^\star\). The partial derivative with respect to \(s_k\) is:
\begin{equation*}
    \frac{\partial \phi^\star}{\partial s_k} = \tau \cdot \frac{1}{\sum_l \exp(s_l/\tau)} \cdot \left(\frac{1}{\tau}\exp(s_k/\tau)\right) = \frac{\exp(s_k/\tau)}{\sum_l \exp(s_l/\tau)} = p_k^\star(\bm{s}).
\end{equation*}
Thus, \(\nabla_{\bm{s}}\phi^\star(\bm{s}) = \bm{p}^\star(\bm{s})\), which completes the proof.
\end{proof}

\begin{remark}[The Potential's True Identity]
This result is the centerpiece of our analysis. The Log-Sum-Exp function is the optimal value of the dual problem. By strong duality, this means it is also the negative of the primal value function, \(\phi^\star(\bm{s}) = -V(\bm{s})\). This connection explains why the LSE function's gradient is the attention distribution. It arises directly from the structure of the entropic optimization problem itself. This governing potential is the critical bridge connecting the optimization of the forward pass to the information geometry of the backward pass, a connection we will make explicit in Section~\ref{sec:natural_gradient_duality}.
\end{remark}

\begin{remark}[Connection to Fenchel Duality]
The use of Lagrangian duality to find the potential is a procedural way to solve a more fundamental operation in convex analysis: the Fenchel-Legendre transform. The Log-Sum-Exp potential is, in fact, the convex conjugate of the constrained negative entropy function. This perspective provides a deeper geometric reason for the duality between the potential and the distribution. For the interested reader, we provide a full derivation from the perspective of Fenchel duality in Appendix~\ref{sec:fenchel_derivation}.
\end{remark}

\section{Unifying the Forward \& Backward Passes via The Information Geometry of the Attention Learning Step}\label{sec:natural_gradient_duality}

\begin{definition}[The Statistical Manifold of Attention]
For a fixed query \(\bm{q}\) and keys \(\{\bm{k}_j\}\), the attention scores \(\bm{s} \in \R^m\) parameterize a family of probability distributions \(\{\bm{p}(\bm{s})\}\) via the softmax function. This family forms a statistical manifold, where each point is a probability distribution \(\bm{p}\). The local geometry of this manifold is described by the Fisher-Rao metric, given by the Fisher Information Matrix (FIM), \(\mat{F}(\bm{s}) \in \R^{m \times m}\).
\end{definition}

\begin{definition}[Fisher Information Matrix for Attention]
The Fisher Information Matrix (FIM) for the attention distribution \(\bm{p}(\bm{s})\) parameterized by the scores \(\bm{s}\) is given by:
\begin{equation}
    \mat{F}_{jk}(\bm{s}) \defeq \E_{\bm{p}(\bm{s})} \left[ \frac{\partial \log p_i(\bm{s})}{\partial s_j} \frac{\partial \log p_i(\bm{s})}{\partial s_k} \right] = \sum_{i=1}^m p_i(\bm{s}) \frac{\partial \log p_i(\bm{s})}{\partial s_j} \frac{\partial \log p_i(\bm{s})}{\partial s_k}.
\end{equation}
\end{definition}

\begin{theorem}[Duality of the Standard and Natural Gradients in Attention]
\label{thm:natural_gradient_duality}
Let \(\nabla_{\bm{s}}\mathcal{L}\) be the standard (Euclidean) gradient of a downstream loss \(\mathcal{L}\) with respect to the attention scores \(\bm{s}\). Let \(\bm{u}\) be the vector of marginal utilities, where \(u_j = -\partial\mathcal{L}/\partial p_j\). The direction of the Natural Gradient update for \(\bm{s}\) is proportional to \(\bm{u}\). The standard gradient is precisely related to this natural gradient direction via the Fisher Information Matrix \(\mat{F}(\bm{s})\) as follows:
\begin{equation}
    \nabla_{\bm{s}}\mathcal{L} = -\tau \mat{F}(\bm{s}) \bm{u}.
\end{equation}
\end{theorem}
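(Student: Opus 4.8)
The plan is to prove the identity by direct computation, taking the advantage-based gradient formula of Theorem~\ref{thm:gradient_as_advantage_main} as the target to match. First I would compute the \emph{score function} of the softmax family, the log-derivative \(\partial\log p_i/\partial s_j\). Since \(\log p_i(\bm{s}) = s_i/\tau - \log Z\) with \(Z = \sum_l \exp(s_l/\tau)\), differentiating and reusing the softmax Jacobian already derived in the proof of Theorem~\ref{thm:gradient_as_advantage_main} gives
\begin{align}
\frac{\partial \log p_i(\bm{s})}{\partial s_j} = \frac{1}{\tau}\left(\delta_{ij} - p_j(\bm{s})\right).
\end{align}

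Next I would substitute this into the definition of the FIM and expand the product \(\sum_i p_i(\delta_{ij}-p_j)(\delta_{ik}-p_k)\). Using \(\sum_i p_i\delta_{ij}\delta_{ik} = p_j\delta_{jk}\), \(\sum_i p_i\delta_{ij} = p_j\), and \(\sum_i p_i = 1\), the quadratic and cross terms collapse to \(-p_jp_k\), yielding the closed form \(\mat{F}_{jk}(\bm{s}) = \tau^{-2}\left(p^\star_j\delta_{jk} - p^\star_j p^\star_k\right)\), i.e. \(\mat{F}(\bm{s}) = \tau^{-2}\bigl(\operatorname{diag}(\bm{p}^\star) - \bm{p}^\star(\bm{p}^\star)^\top\bigr)\). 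Then I would compute the \(j\)-th component of \(-\tau\mat{F}(\bm{s})\bm{u}\):
\begin{align}
\bigl(-\tau\mat{F}(\bm{s})\bm{u}\bigr)_j = -\frac{1}{\tau}\sum_{k=1}^m\bigl(p^\star_j\delta_{jk} - p^\star_jp^\star_k\bigr)u_k = -\frac{p^\star_j}{\tau}\Bigl(u_j - \sum_{k=1}^m p^\star_k u_k\Bigr) = -\frac{p^\star_j}{\tau}\bigl(u_j - \E_{\bm{p}^\star}[\bm{u}]\bigr).
\end{align}
This is exactly the expression for \(\partial\mathcal{L}/\partial s_j\) established in Theorem~\ref{thm:gradient_as_advantage_main}, so \(\nabla_{\bm{s}}\mathcal{L} = -\tau\mat{F}(\bm{s})\bm{u}\), completing the proof of the matrix identity.

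For the accompanying claim that the natural gradient direction is proportional to \(\bm{u}\), I would note that the natural gradient is \(\mat{F}^{+}\nabla_{\bm{s}}\mathcal{L}\), and since \(\mat{F}(\bm{s})\) has rank \(m-1\) with null space spanned by \(\bm{1}\) (reflecting the shift invariance \(s_j\mapsto s_j+c\) of the softmax), the identity \(\nabla_{\bm{s}}\mathcal{L} = -\tau\mat{F}(\bm{s})\bm{u}\) exhibits \(-\tau\bm{u}\) as a preimage of the standard gradient under \(\mat{F}\); hence \(\bm{u}\) represents the natural gradient direction up to this harmless null-space component. The main obstacle is not the computation, which is routine Kronecker-delta bookkeeping, but rather stating the natural-gradient interpretation precisely in the presence of the singular FIM, which I would handle by working with the Moore--Penrose pseudoinverse and identifying the null space explicitly with the softmax overparameterization.
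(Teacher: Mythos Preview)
Your proposal is correct and follows essentially the same route as the paper: compute the score function \(\partial\log p_i/\partial s_j = \tau^{-1}(\delta_{ij}-p_j)\), expand the FIM to obtain \(\mat{F} = \tau^{-2}(\operatorname{diag}(\bm{p}^\star)-\bm{p}^\star(\bm{p}^\star)^\top)\), and match \(-\tau\mat{F}\bm{u}\) against the advantage-based gradient of Theorem~\ref{thm:gradient_as_advantage_main}. Your treatment of the natural-gradient claim is in fact more careful than the paper's, which writes \(\mat{F}^{-1}\) without comment; your observation that \(\mat{F}\) is singular with null space \(\operatorname{span}\{\bm{1}\}\) and that one should work with the pseudoinverse is the correct way to make the statement precise.
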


\begin{proof}
The proof proceeds in three steps. First, we compute the Fisher Information Matrix for the attention mechanism. Second, we re-express the standard gradient from Theorem \ref{thm:gradient_as_advantage_main} in matrix form. Finally, we establish the direct relationship between the two.

\paragraph{Compute the Fisher Information Matrix.}
The log-probability of the \(i\)-th component of the attention distribution is 
\begin{align}
\log p_i = \frac{s_i}{\tau} - \log Z, 
\end{align}
where 
\begin{align}
Z = \sum_l \exp(s_l/\tau).
\end{align}
The derivative with respect to a score \(s_j\) is:
\begin{equation}
    \frac{\partial \log p_i}{\partial s_j} = \frac{\delta_{ij}}{\tau} - \frac{\partial \log Z}{\partial s_j} = \frac{\delta_{ij}}{\tau} - \frac{1}{Z} \frac{\exp(s_j/\tau)}{\tau} = \frac{1}{\tau}(\delta_{ij} - p_j).
    \label{eq:log_prob_deriv}
\end{equation}
Now, we substitute this into the definition of the FIM:
\begin{align}
    \mat{F}_{jk} &= \sum_{i=1}^m p_i \left( \frac{1}{\tau}(\delta_{ij} - p_j) \right) \left( \frac{1}{\tau}(\delta_{ik} - p_k) \right) \\
    &= \frac{1}{\tau^2} \sum_{i=1}^m p_i (\delta_{ij} - p_j)(\delta_{ik} - p_k) \\
    &= \frac{1}{\tau^2} \sum_{i=1}^m p_i (\delta_{ij}\delta_{ik} - p_k\delta_{ij} - p_j\delta_{ik} + p_j p_k).
\end{align}
We evaluate the sum term by term:
\begin{align*}
    \sum_i p_i \delta_{ij}\delta_{ik} &= p_j \delta_{jk} && \text{(non-zero only for } i=j \text{ and } i=k\text{)} \\
    \sum_i p_i (-p_k\delta_{ij})    &= -p_j p_k      && \text{(non-zero only for } i=j\text{)} \\
    \sum_i p_i (-p_j\delta_{ik})    &= -p_k p_j      && \text{(non-zero only for } i=k\text{)} \\
    \sum_i p_i (p_j p_k)            &= p_j p_k       && \text{(since } \sum_i p_i = 1 \text{)}
\end{align*}
Combining these terms, we get:
\begin{align}
    \mat{F}_{jk} &= \frac{1}{\tau^2} (p_j \delta_{jk} - p_j p_k - p_j p_k + p_j p_k) \\
    &= \frac{1}{\tau^2} (p_j \delta_{jk} - p_j p_k).
\end{align}
In matrix form, letting \(\mat{D} = \text{diag}(\bm{p})\), this is:
\begin{equation}
    \mat{F}(\bm{s}) = \frac{1}{\tau^2} (\mat{D} - \bm{p}\bm{p}^T).
    \label{eq:fim_final}
\end{equation}

\paragraph{Express the standard gradient in matrix form.}
We will now express the standard gradient in matrix form. From Theorem \ref{thm:gradient_as_advantage_main}, the \(j\)-th component of the gradient is: 
\begin{align}
\frac{\partial \mathcal{L}}{\partial s_j} = -\frac{p_j}{\tau}(u_j - \E_{\bm{p}}[\bm{u}]).
\end{align}
Let's write this for the entire gradient vector \(\nabla_{\bm{s}}\mathcal{L}\):
\begin{align}
    \nabla_{\bm{s}}\mathcal{L} &= -\frac{1}{\tau} \begin{bmatrix} p_1(u_1 - \E[\bm{u}]) \\ \vdots \\ p_m(u_m - \E[\bm{u}]) \end{bmatrix} \\
    &= -\frac{1}{\tau} \left( \begin{bmatrix} p_1 u_1 \\ \vdots \\ p_m u_m \end{bmatrix} - \E[\bm{u}] \begin{bmatrix} p_1 \\ \vdots \\ p_m \end{bmatrix} \right) \\
    &= -\frac{1}{\tau} (\text{diag}(\bm{p}) \bm{u} - (\bm{p}^T \bm{u}) \bm{p}) \quad \text{(since } \E[\bm{u}] = \sum p_k u_k = \bm{p}^T\bm{u}) \\
    &= -\frac{1}{\tau} (\mat{D}\bm{u} - \bm{p}\bm{p}^T\bm{u}) \\
    &= -\frac{1}{\tau} (\mat{D} - \bm{p}\bm{p}^T) \bm{u}.
    \label{eq:standard_grad_matrix}
\end{align}

\paragraph{Establish the relationship.}
We have the FIM from Eq. \eqref{eq:fim_final} and the standard gradient from Eq. \eqref{eq:standard_grad_matrix}. By simple substitution, we can see the relationship:
\begin{align}
    \nabla_{\bm{s}}\mathcal{L} &= -\frac{1}{\tau} (\mat{D} - \bm{p}\bm{p}^T) \bm{u} \\
    &= -\frac{1}{\tau} (\tau^2 \mat{F}(\bm{s})) \bm{u} \\
    &= -\tau \mat{F}(\bm{s}) \bm{u}.
\end{align}
The natural gradient update direction is defined as 
\begin{align}
\Delta \bm{s}_{\text{nat}} \propto -\mat{F}^{-1} \nabla_{\bm{s}}\mathcal{L}.
\end{align}
Substituting our result gives 
\begin{align}
\Delta \bm{s}_{\text{nat}} \propto -\mat{F}^{-1}(-\tau \mat{F} \bm{u}) = \tau\bm{u}. 
\end{align}
Thus, the natural gradient update is in the direction of the marginal utilities \(\bm{u}\). This completes the proof.
\end{proof}

\begin{remark}[The Geometry of the Attention Gradient]
\label{rem:geometric_significance}
Theorem~\ref{thm:natural_gradient_duality} reveals that the standard backpropagation update for attention scores is not a simple Euclidean gradient descent, but a sophisticated, manifold-aware learning rule.
\begin{enumerate}
    \item \textbf{Implicit Manifold Curvature:} The term $\text{diag}(\bm{p}) - \bm{p}\bm{p}^T$ that arises naturally from the softmax derivative is, up to a scalar, the Fisher Information Matrix (FIM). This means backpropagation implicitly computes and utilizes the curvature of the statistical manifold of attention distributions.

    \item \textbf{A Geometrically Dual Update:} The Natural Gradient is defined by preconditioning the standard gradient with the \emph{inverse} FIM $\Delta \bm{s}_{\text{nat}} \propto -\mat{F}^{-1}\nabla_{\bm{s}}\mathcal{L}$. The attention gradient is proportional to the "ideal" update direction (the marginal utility vector \(\bm{u}\)) preconditioned by the FIM \emph{itself}: \(\nabla_{\bm{s}}\mathcal{L} = -\tau \mat{F}(\bm{s}) \bm{u}\). The standard gradient is therefore the dual of the natural gradient with respect to the Fisher-Rao metric.
\end{enumerate}
Therefore, the learning rule for SDPA is a computationally efficient algorithm that is deeply intertwined with the intrinsic geometry of the learning problem it solves.
\end{remark}

We now present the final link that connects the geometry of the learning update back to the optimization problem of the forward pass.

\begin{corollary}[The Dual Hessian and the Fisher Information Matrix]
\label{cor:hessian_fim_link}
Let \(\phi^\star(\bm{s})\) be the optimal dual potential from the EOT problem (Theorem~\ref{thm:lse_as_dual_potential}) and \(\mat{F}(\bm{s})\) be the Fisher Information Matrix of the resulting attention distribution \(\bm{p}^\star(\bm{s})\) (Theorem~\ref{thm:natural_gradient_duality}). The Hessian matrix of the dual potential is directly proportional to the Fisher Information Matrix, with the temperature \(\tau\) as the constant of proportionality:
\begin{equation}
    \nabla^2_{\bm{s}}\phi^\star(\bm{s}) = \tau \mat{F}(\bm{s}).
\end{equation}
\end{corollary}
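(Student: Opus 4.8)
The plan is to exploit the fact, already established, that the dual potential $\phi^\star$ is a primitive of the attention distribution. By Theorem~\ref{thm:lse_as_dual_potential} we have $\nabla_{\bm{s}}\phi^\star(\bm{s}) = \bm{p}^\star(\bm{s})$, so the Hessian $\nabla^2_{\bm{s}}\phi^\star(\bm{s})$ is nothing other than the Jacobian of the map $\bm{s}\mapsto\bm{p}^\star(\bm{s})$; that is, $[\nabla^2_{\bm{s}}\phi^\star(\bm{s})]_{jk} = \partial p^\star_k/\partial s_j$. The entries of this Jacobian were already computed inside the proof of Theorem~\ref{thm:gradient_as_advantage_main}: the softmax Jacobian is $\partial p_k/\partial s_j = \tfrac{1}{\tau}\,p_k(\delta_{kj}-p_j)$, which in matrix form reads $\nabla^2_{\bm{s}}\phi^\star(\bm{s}) = \tfrac{1}{\tau}(\mat{D}-\bm{p}\bm{p}^T)$ with $\mat{D}=\mathrm{diag}(\bm{p}^\star)$. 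For a self-contained alternative one can instead differentiate $\phi^\star(\bm{s}) = \tau\log\sum_l\exp(s_l/\tau)$ twice directly: the first derivative returns $p^\star_k$ (the outer $\tau$ cancelling the $1/\tau$ from the chain rule), and differentiating once more reproduces exactly the same softmax Jacobian, so the two routes agree.

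Second, I would recall from Eq.~\eqref{eq:fim_final} that the Fisher Information Matrix of the attention distribution is $\mat{F}(\bm{s}) = \tfrac{1}{\tau^2}(\mat{D}-\bm{p}\bm{p}^T)$. Comparing the two expressions then gives $\nabla^2_{\bm{s}}\phi^\star(\bm{s}) = \tfrac{1}{\tau}(\mat{D}-\bm{p}\bm{p}^T) = \tau\cdot\tfrac{1}{\tau^2}(\mat{D}-\bm{p}\bm{p}^T) = \tau\,\mat{F}(\bm{s})$, which is precisely the claim. No auxiliary lemma is required, since every ingredient — the gradient identity $\nabla_{\bm{s}}\phi^\star=\bm{p}^\star$, the softmax Jacobian, and the closed form of $\mat{F}$ — has already been derived in the preceding sections.

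Since all the pieces are in hand, there is essentially no obstacle; the only thing demanding care is the bookkeeping of the temperature factors — keeping track of the fact that the first derivative of $\phi^\star$ carries no $1/\tau$ while the second carries exactly one factor of $1/\tau$, to be matched against the $1/\tau^2$ sitting in front of $\mat{F}(\bm{s})$. I would close with a one-line remark that this identity is the familiar statement that the Hessian of a log-partition function equals the covariance of the sufficient statistics: here $\mat{D}-\bm{p}\bm{p}^T$ is the covariance matrix of the one-hot key indicator under $\bm{p}^\star$, and the factor $\tau$ is exactly the temperature scaling relating the two parameterizations. This is the information-geometric reason the forward-pass potential $\phi^\star$ (curvature of the optimized EOT value) and the backward-pass metric $\mat{F}$ (the Fisher-Rao metric governing the learning update of Theorem~\ref{thm:natural_gradient_duality}) are the same object up to scale.
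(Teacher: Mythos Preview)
Your proposal is correct and follows essentially the same route as the paper: both identify the Hessian of $\phi^\star$ with the softmax Jacobian via $\nabla_{\bm{s}}\phi^\star=\bm{p}^\star$, recall its matrix form $\tfrac{1}{\tau}(\mat{D}-\bm{p}\bm{p}^T)$ from the proof of Theorem~\ref{thm:gradient_as_advantage_main}, and compare against the FIM expression $\tfrac{1}{\tau^2}(\mat{D}-\bm{p}\bm{p}^T)$ from Eq.~\eqref{eq:fim_final}. Your closing remark that this is the log-partition Hessian equals covariance identity is a nice addition not present in the paper's proof.
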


\begin{proof}
The proof follows by direct comparison of the matrix forms of the two quantities, which were established in our preceding analysis. 

First, from Theorem~\ref{thm:lse_as_dual_potential}, we know that \(\nabla_{\bm{s}} \phi^\star = \bm{p}^\star\). The Hessian is therefore the Jacobian of the attention map:
\begin{align}
\nabla^2_{\bm{s}} \phi^\star = \frac{\partial \bm{p}^\star}{\partial \bm{s}}. 
\end{align}
The entries of this Jacobian were derived in the proof of Theorem~\ref{thm:gradient_as_advantage_main} to be 
\begin{align}
\frac{1}{\tau} p_j^\star(\delta_{jk} - p_k^\star). 
\end{align}
In matrix form, this is:
\begin{equation}
    \nabla^2_{\bm{s}}\phi^\star(\bm{s}) = \frac{1}{\tau} \left( \text{diag}(\bm{p}^\star) - \bm{p}^\star(\bm{p}^\star)^T \right).
    \label{eq:hessian_matrix_form}
\end{equation}
Second, in the proof of Theorem~\ref{thm:natural_gradient_duality}, we derived the Fisher Information Matrix to be:
\begin{equation}
    \mat{F}(\bm{s}) = \frac{1}{\tau^2} \left( \text{diag}(\bm{p}^\star) - \bm{p}^\star(\bm{p}^\star)^T \right).
    \label{eq:fim_matrix_form_corollary}
\end{equation}
By comparing Eq.~\eqref{eq:hessian_matrix_form} and Eq.~\eqref{eq:fim_matrix_form_corollary}, the relationship is immediate. Multiplying \(\mat{F}(\bm{s})\) by \(\tau\) yields the Hessian of the dual potential:
\begin{equation}
    \tau \mat{F}(\bm{s}) = \tau \cdot \left[ \frac{1}{\tau^2} \left( \text{diag}(\bm{p}^\star) - \bm{p}^\star(\bm{p}^\star)^T \right) \right] = \nabla^2_{\bm{s}}\phi^\star(\bm{s}).
\end{equation}
This completes the proof.
\end{proof}

\begin{remark}[From Optimal Transport to Optimal Control]
\label{rem:unification}
Corollary~\ref{cor:hessian_fim_link} establishes that the matrix \(\frac{1}{\tau}(\text{diag}(\bm{p}^\star) - \bm{p}^\star(\bm{p}^\star)^T)\) is the central object that unifies the perspectives of this paper.
\begin{enumerate}
    \item From the perspective of \textbf{Entropic Optimal Transport}, it is the Hessian of the dual potential. It describes the curvature of the optimization problem's value function and quantifies how the optimal attention distribution responds to perturbations in the input scores.

    \item From the perspective of \textbf{Information Geometry}, it is the Fisher Information Matrix (scaled by \(\tau^{-1}\)). It defines the natural Riemannian metric on the statistical manifold of attention distributions, measuring the "information distance" between them.

    \item From the perspective of \textbf{Reinforcement Learning and Control}, it is the geometric transducer that produces the learning signal. Its product with the marginal utility vector \(\bm{u}\) yields the standard gradient, which we previously showed is precisely the advantage-based policy gradient from RL.
\end{enumerate}
Therefore, the geometry of the forward-pass optimization problem (\(1\)) \emph{is} the geometry of the statistical manifold of its solutions (\(2\)), and it is this shared geometry that dictates the principled, variance-reduced learning rule implemented by the backward pass (\(3\)).
\end{remark}

\section{Conclusion and Discussion}

This paper has established a mathematical foundation for the scaled-dot-product attention mechanism, grounding it in the principles of optimization and control. We first showed that the forward pass, which computes the attention weights, is the unique analytical solution to a one-sided Entropic Optimal Transport problem. This perspective clarifies the role of the softmax function as the result of a constrained maximum entropy objective. We then proved that the backward pass, driven by standard backpropagation, implements an advantage-based policy gradient. Thus, the learning dynamics of attention are governed by a rational, variance-reduced update rule that reinforces attending to keys that perform better than the policy's average.

The central finding of this work is that these two results are deeply connected. The choice of Shannon entropy as the regularizer in the forward-pass optimization problem defines an information geometry on the space of attention distributions. It is the curvature of this manifold, captured by the Fisher Information Matrix, that dictates the form of the learning update. The advantage-based gradient is not an accident of the softmax derivative; it is a direct consequence of performing gradient descent on this specific geometric landscape. This perspective also offers a formal language for designing new mechanisms, for instance, by replacing the Shannon entropy with alternative entropy measures and regularizers, to induce other desirable properties in the resulting distributions.

\bibliographystyle{plain}

\begin{thebibliography}{99}
\bibitem{vaswani2017attention}
Ashish Vaswani, Noam Shazeer, Niki Parmar, Jakob Uszkoreit, Llion Jones, Aidan N. Gomez, {\L}ukasz Kaiser, and Illia Polosukhin.
\newblock Attention is all you need.
\newblock In {\em Advances in Neural Information Processing Systems}, 30, 2017.

\bibitem{jaynes1957information}
Edwin T. Jaynes.
\newblock Information theory and statistical mechanics.
\newblock {\em Physical Review}, 106(4):620, 1957.

\bibitem{cuturi2013sinkhorn}
Marco Cuturi.
\newblock Sinkhorn distances: Lightspeed computation of optimal transport.
\newblock In {\em Advances in Neural Information Processing Systems}, 26, 2013.

\bibitem{peyre2019computational}
Gabriel Peyr{\'e} and Marco Cuturi.
\newblock Computational optimal transport.
\newblock {\em Foundations and Trends in Machine Learning}, 11(5-6):355--607, 2019.

\bibitem{mena2018learning}
Gonzalo Mena, David Belanger, Scott Linderman, and Jasper Snoek.
\newblock Learning latent permutations with Gumbel-Sinkhorn networks.
\newblock In {\em International Conference on Learning Representations}, 2018.

\bibitem{correia2019adaptively}
Gonçalo M. Correia, Vlad Niculae, and André F. T. Martins.
\newblock Adaptively sparse transformers.
\newblock In {\em Conference on Empirical Methods in Natural Language Processing}, pages 2174--2184, 2019.

\bibitem{martins2016from}
Andr{\'e} F. T. Martins and Ram{\'o}n F. Astudillo.
\newblock From softmax to sparsemax: A sparse model of attention and multi-label classification.
\newblock In {\em International Conference on Machine Learning}, 2016.

\bibitem{peters2019sparse}
Ben Peters, Vlad Niculae, and Andr{\'e} F. T. Martins.
\newblock Sparse sequence-to-sequence models.
\newblock In {\em Annual Meeting of the Association for Computational Linguistics}, 2019.

\bibitem{sutton2018reinforcement}
Richard S. Sutton and Andrew G. Barto.
\newblock {\em Reinforcement Learning: An Introduction}.
\newblock MIT Press, 2018.

\bibitem{williams1992simple}
Ronald J. Williams.
\newblock Simple statistical gradient-following algorithms for connectionist reinforcement learning.
\newblock {\em Machine Learning}, 8(3):229--256, 1992.

\end{thebibliography}

\newpage
\appendix

\section{The Full Attention Matrix as an Entropic Optimal Transport Problem}
\label{appendix:generalized_eot}

The main body of this work demonstrates that the attention weights for a single query vector can be understood as the solution to a one-sided Entropic Optimal Transport (EOT) problem. In that simplified view, the source distribution is a Dirac mass centered on the single query, and the resulting transport plan is a probability vector over the keys. While this perspective is valid and insightful for a single row of the attention matrix, a complete formulation demonstrates that the entire \(n \times m\) attention matrix is the unique solution to a single, unified EOT problem.

This appendix provides a step-by-step derivation of this more general formulation. We replace the single-query source with a source measure defined over the entire set of \(n\) queries, specifically, a uniform counting measure where each query is a source of one unit of "attention mass." We then prove that minimizing the regularized transport cost under this global setup yields a transport plan that is precisely the scaled dot-product attention matrix. This confirms that the row-wise softmax operation is not merely an independent computation for each query but emerges naturally as the solution to a global, albeit decomposable, optimization problem.

\subsection*{Formal Construction of the One-Sided EOT Problem}
\label{subsec:formal_construction}

We begin by formally constructing the constituent elements of the EOT problem: the source and target spaces, the associated measures, the cost function, and the entropic regularization term.

\begin{definition}[Source and Target Spaces]
Let the set of \(n\) query vectors be \(\{\bm{q}_i\}_{i=1}^n \subseteq \mathbb{R}^{d_k}\) and the set of \(m\) key vectors be \(\{\bm{k}_j\}_{j=1}^m \subseteq \mathbb{R}^{d_k}\). We define the source space \(\mathcal{X}\) and target space \(\mathcal{Y}\) as the discrete index sets corresponding to these vectors:
\begin{align}
    \mathcal{X} &= \{1, 2, \dots, n\} \\
    \mathcal{Y} &= \{1, 2, \dots, m\}
\end{align}
\end{definition}

\begin{definition}[Source and Target Measures]
The source measure \(a\) on \(\mathcal{X}\) is defined as the uniform positive counting measure, assigning a mass of one to each element. That is, \( a_i = 1 \) for all \( i \in \mathcal{X} \). This signifies that each query \(\bm{q}_i\) is the origin of a unit of attention mass to be distributed. The total mass is \(\sum_i a_i = n\).

A critical aspect of our formulation is that the target marginal measure \(b\) on \(\mathcal{Y}\) is left \textbf{free} or unconstrained. Unlike classical EOT problems that enforce fixed marginals on both source and target, the distribution of mass arriving at the target indices \(j \in \mathcal{Y}\) is not constrained a priori.
\end{definition}

\begin{definition}[Feasible Transport Plans]
A transport plan, or coupling, \(\mat{P} \in \mathbb{R}_+^{n \times m}\) represents the flow of mass from the source space to the target space, where \(P_{ij}\) is the mass transported from source \(i\) to target \(j\). The feasibility of a plan is dictated solely by the source marginal constraint. The set of feasible transport plans, denoted \(U(a, \cdot)\), is therefore:
\begin{equation}
    U(a, \cdot) \triangleq \left\{ \mat{P} \in \mathbb{R}_+^{n \times m} \mid \sum_{j=1}^m P_{ij} = a_i = 1 \text{ for all } i \in \{1, \dots, n\} \right\}.
\end{equation}
Note that any matrix \(\mat{P} \in U(a, \cdot)\) is row-stochastic; each of its rows is a probability vector belonging to the standard \((m-1)\)-probability simplex, \(\ProbSimplex{m-1}\).
\end{definition}

\begin{definition}[Cost Matrix and Entropic Regularization]
The cost \(C_{ij}\) of transporting mass from source \(i\) (query \(\bm{q}_i\)) to target \(j\) (key \(\bm{k}_j\)) is defined as the negative of their similarity. This aligns the OT objective of cost minimization with the attention mechanism's goal of similarity maximization. The cost matrix \(\mat{C} \in \R^{n \times m}\) is:
\begin{equation}
    C_{ij} \triangleq -\ip{\bm{q}_i}{\bm{k}_j}.
\end{equation}
The entropic regularization term is the negative Shannon entropy of the transport plan, scaled by a positive parameter \(\varepsilon > 0\). The matrix Shannon entropy is defined as:
\begin{equation}
    H(\mat{P}) \triangleq -\sum_{i=1}^n \sum_{j=1}^m P_{ij} \log P_{ij},
\end{equation}
with the standard convention \(0 \log 0 = 0\). To align with SDPA, we set \(\varepsilon = \tau\), where \(\tau\) is the attention temperature (typically \(\sqrt{d_k}\)).
\end{definition}

\begin{definition}[The EOT Objective Functional]
The one-sided EOT problem seeks to find a feasible transport plan \(\mat{P} \in U(a, \cdot)\) that minimizes the sum of the total transport cost and the scaled negative entropy. The objective functional \(J(\mat{P})\) is:
\begin{align}
    J(\mat{P}) &\triangleq \langle \mat{P}, \mat{C} \rangle_F - \varepsilon H(\mat{P}) \\
    &= -\sum_{i=1}^n \sum_{j=1}^m P_{ij} \ip{\bm{q}_i}{\bm{k}_j} + \varepsilon \sum_{i=1}^n \sum_{j=1}^m P_{ij} \log P_{ij}.
\end{align}
The optimization problem is thus stated as:
\begin{equation}
    \min_{\mat{P} \in U(a, \cdot)} J(\mat{P}).
    \label{eq:eot_problem_final}
\end{equation}
\end{definition}

\subsection*{Existence and Uniqueness of the Optimal Transport Plan}

Before solving the problem, we must establish that a unique solution exists.

\begin{theorem}[Existence and Uniqueness]
For any finite query and key vectors and any regularization parameter \(\varepsilon > 0\), the optimization problem in Eq. \eqref{eq:eot_problem_final} has a unique minimizer \(\mat{P}^\star \in U(a, \cdot)\).
\end{theorem}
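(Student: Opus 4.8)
The plan is to follow the same two-stage template used in the proof of Theorem~\ref{thm:attention_as_eot}, now at the level of the full $n\times m$ plan: first establish existence of a minimizer by a compactness argument, then promote it to uniqueness via strict convexity. The key structural observation is that passing from a single probability vector to a row-stochastic matrix changes nothing essential, because $U(a,\cdot)$ is exactly the $n$-fold Cartesian product of the simplex $\ProbSimplex{m-1}$ and the objective $J(\mat{P})$ is \emph{separable across rows}: writing $\mat{p}_i$ for the $i$-th row, one has $J(\mat{P}) = \sum_{i=1}^n J_i(\mat{p}_i)$, where each $J_i$ is precisely the single-query functional of Definition~\ref{def:eot_problem_attn} with cost vector $C_{ij} = -\ip{\bm{q}_i}{\bm{k}_j}$.

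First I would verify that $U(a,\cdot)$ is a nonempty, convex, compact subset of $\R^{n\times m}$: nonemptiness is witnessed by the uniform plan $P_{ij}=1/m$; convexity holds because $U(a,\cdot)$ is the intersection of the nonnegative orthant with the affine subspace cut out by the $n$ row-sum equalities; and compactness holds because the set is closed (defined by non-strict inequalities and equalities of continuous functions) and bounded (every entry lies in $[0,1]$). Next I would note that $J$ is continuous on $U(a,\cdot)$: the transport-cost term $\ip{\mat{P}}{\mat{C}}_F$ is linear, and the entropic term $\varepsilon\sum_{i,j}P_{ij}\log P_{ij}$ is continuous on $[0,\infty)^{n\times m}$ under the convention $0\log 0 = 0$, since $x\mapsto x\log x$ extends continuously to the origin. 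The Extreme Value Theorem then yields a minimizer on the nonempty compact set $U(a,\cdot)$, settling existence.

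For uniqueness I would argue that $J$ is strictly convex on the convex set $U(a,\cdot)$: the cost term is linear hence convex, and the entropic term is a separable sum of the maps $P_{ij}\mapsto \varepsilon P_{ij}\log P_{ij}$, each strictly convex on $(0,\infty)$ since its second derivative $\varepsilon/P_{ij}$ is strictly positive. A separable sum of strictly convex summands is strictly convex, because any two distinct feasible matrices differ in at least one entry, and strict convexity in that coordinate together with convexity of the remaining terms forces a strict inequality along the connecting segment; hence a midpoint argument rules out two distinct minimizers. Equivalently — and perhaps most cleanly — since the constraints decouple row by row, minimizing $J$ reduces to minimizing each $J_i$ over $\ProbSimplex{m-1}$ independently, and Theorem~\ref{thm:attention_as_eot} already furnishes the unique minimizer of each, namely the softmax of $\bm{q}_i$ against the keys. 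I do not anticipate a genuine obstacle; the only point requiring mild care is that the entropy fails to be differentiable on the boundary of the simplex, which is exactly why the argument is routed through strict convexity and compactness rather than through first-order optimality conditions, and the elementary bound $x\log x \ge -1/e$ on $[0,1]$ additionally ensures $J$ never takes the value $-\infty$.
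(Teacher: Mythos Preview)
Your proposal is correct and follows essentially the same approach as the paper: verify that $U(a,\cdot)$ is nonempty, convex, and compact, that $J$ is continuous and strictly convex, then invoke the Extreme Value Theorem for existence and strict convexity for uniqueness. Your added remarks---the explicit midpoint argument for why a separable sum of strictly convex summands is strictly convex, the observation that nondifferentiability on the boundary is why one avoids first-order conditions here, and the alternative route via row-wise decomposition reducing to Theorem~\ref{thm:attention_as_eot}---are sound refinements but do not depart from the paper's line of reasoning.
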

\begin{proof}
The proof proceeds by analyzing the properties of the domain \(U(a, \cdot)\) and the objective functional \(J(\mat{P})\).

\paragraph{Properties of the Domain.}
The feasible set \(U(a, \cdot)\) is the Cartesian product of \(n\) probability simplices, \(U(a, \cdot) = (\ProbSimplex{m-1})^n\).
\begin{enumerate}
    \item \textbf{Non-empty:} The set is non-empty, as it contains, for example, the uniform matrix \(P_{ij} = 1/m\).
    \item \textbf{Closed:} Each simplex \(\ProbSimplex{m-1}\) is a closed subset of \(\R^m\). Their Cartesian product is therefore a closed subset of \(\R^{n \times m}\).
    \item \textbf{Bounded:} For any \(\mat{P} \in U(a, \cdot)\), we have \(0 \le P_{ij} \le 1\), which implies the matrix is bounded in any norm.
    \item \textbf{Compact:} Since \(U(a, \cdot)\) is closed and bounded in \(\R^{n \times m}\), it is compact by the Heine-Borel theorem.
    \item \textbf{Convex:} Each simplex is a convex set. Their Cartesian product is therefore also convex.
\end{enumerate}

\paragraph{Properties of the Objective Functional.}
The functional \(J(\mat{P})\) is a sum of two terms. The first, \(L(\mat{P}) = \langle \mat{P}, \mat{C} \rangle_F\), is linear in \(\mat{P}\) and thus convex and continuous. The second term is \(\varepsilon \sum_{i,j} P_{ij} \log P_{ij}\). The function \(f(x) = x \log x\) is strictly convex on \(x>0\). Because the objective function is a sum over all components \(P_{ij}\), \(J(\mat{P})\) is a sum of a convex function (\(L(\mat{P})\)) and a strictly convex function (\(\varepsilon \sum_{i,j} f(P_{ij})\)). A key property is that the sum of a convex function and a strictly convex function is strictly convex. Therefore, \(J(\mat{P})\) is strictly convex on the domain \(U(a, \cdot)\). It is also continuous on this domain.

\paragraph{Conclusion.}
By the Weierstrass Extreme Value Theorem, a continuous function (\(J(\mat{P})\)) on a non-empty compact set (\(U(a, \cdot)\)) must attain its minimum. Furthermore, because \(J(\mat{P})\) is strictly convex and the domain \(U(a, \cdot)\) is convex, this minimum must be unique.
\end{proof}

\subsection*{Analytical Solution and Equivalence to the Attention Matrix}

We now derive the closed-form solution to the EOT problem and show its identity with the SDPA matrix.

\begin{theorem}[EOT Solution Equals Attention Matrix]
The unique minimizer \(\mat{P}^\star\) of the EOT problem in Eq. \eqref{eq:eot_problem_final} with \(\varepsilon = \tau\) is identical to the scaled dot-product attention matrix \(\mat{A}\), where
\begin{equation}
    A_{ij} = \frac{\exp(\ip{\bm{q}_i}{\bm{k}_j} / \tau)}{\sum_{l=1}^{m} \exp(\ip{\bm{q}_i}{\bm{k}_l} / \tau)}.
\end{equation}
\end{theorem}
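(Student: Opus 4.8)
The plan is to exploit the fact that the EOT problem over the full $n \times m$ attention matrix decomposes completely into $n$ independent copies of the single-query problem already solved in Theorem~\ref{thm:attention_as_eot}. First I would observe that both the domain and the objective are separable across the rows of $\mat{P}$. The feasible set is the Cartesian product $U(a,\cdot) = (\ProbSimplex{m-1})^n$, so a matrix is feasible precisely when each of its rows lies in $\ProbSimplex{m-1}$, with no coupling constraint linking distinct rows. Meanwhile the objective functional splits as $J(\mat{P}) = \sum_{i=1}^n J_i(\bm{p}_i)$, where $\bm{p}_i \defeq (P_{i1},\dots,P_{im})$ is the $i$-th row and $J_i(\bm{p}_i) = -\sum_{j} P_{ij}\ip{\bm{q}_i}{\bm{k}_j} + \tau \sum_{j} P_{ij}\log P_{ij}$ is exactly the one-sided EOT objective of Definition~\ref{def:eot_problem_attn} for the query $\bm{q}_i$ with cost vector $C_{ij} = -\ip{\bm{q}_i}{\bm{k}_j}$.

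Second, I would invoke the elementary fact that minimizing a sum of functions over a product domain, when each summand depends only on its own block of variables, is equivalent to minimizing each summand separately: $\min_{\mat{P} \in (\ProbSimplex{m-1})^n} \sum_{i} J_i(\bm{p}_i) = \sum_{i} \min_{\bm{p}_i \in \ProbSimplex{m-1}} J_i(\bm{p}_i)$, with the optimal matrix obtained by stacking the individual row minimizers. Then Theorem~\ref{thm:attention_as_eot}, applied to each index $i$ with $\varepsilon = \tau$, guarantees that the unique minimizer of $J_i$ over $\ProbSimplex{m-1}$ is the softmax vector $P^\star_{ij} = \exp(\ip{\bm{q}_i}{\bm{k}_j}/\tau) \big/ \sum_{l} \exp(\ip{\bm{q}_i}{\bm{k}_l}/\tau) = A_{ij}$. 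Assembling these rows yields $\mat{P}^\star = \mat{A}$, and uniqueness follows both from the per-row uniqueness and from the strict convexity established in the preceding existence-and-uniqueness theorem.

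As an alternative route (and a useful cross-check), I would also sketch the direct Lagrangian computation on the matrix variable: introduce one multiplier $\lambda_i$ for each row-sum constraint $\sum_{j} P_{ij} = 1$, form $\mathcal{L}(\mat{P}, \bm{\lambda}) = J(\mat{P}) + \sum_{i} \lambda_i\big(1 - \sum_{j} P_{ij}\big)$, and impose stationarity $\partial\mathcal{L}/\partial P_{ij} = -\ip{\bm{q}_i}{\bm{k}_j} + \tau(\log P_{ij} + 1) - \lambda_i = 0$. As in the single-query proof, the entropic term forces the solution into the relative interior of each simplex, so the nonnegativity constraints are inactive at the optimum. Solving gives $P_{ij} \propto \exp(\ip{\bm{q}_i}{\bm{k}_j}/\tau)$ with a row-dependent normalizer, and enforcing each row-sum recovers the partition function $\sum_{l} \exp(\ip{\bm{q}_i}{\bm{k}_l}/\tau)$, again yielding $\mat{P}^\star = \mat{A}$.

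I do not anticipate a genuine obstacle; the result is essentially a corollary of Theorem~\ref{thm:attention_as_eot} together with separability. The one point that warrants an explicit statement rather than a hand-wave is the decoupling step: that the block-separable structure of both the objective and the constraint set legitimately reduces a single $nm$-dimensional convex program to $n$ uncoupled $m$-dimensional ones, so that the row-wise softmax is not an ansatz but the forced consequence of the global optimization. Everything else is the routine Lagrange-multiplier manipulation already carried out in the single-query case.
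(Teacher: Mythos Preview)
Your proposal is correct. Your primary route---observing that both the feasible set $U(a,\cdot)=(\ProbSimplex{m-1})^n$ and the objective $J(\mat{P})=\sum_i J_i(\bm{p}_i)$ are block-separable, then invoking Theorem~\ref{thm:attention_as_eot} on each row---is slightly more economical than the paper's proof. The paper forms the full Lagrangian with multipliers $\lambda_1,\dots,\lambda_n$, observes separability \emph{inside} the Lagrangian, and then redoes the stationarity-plus-normalization computation row by row; this is exactly your alternative cross-check route. Your primary argument buys modularity: once the single-query theorem is on the books, the matrix case really is a corollary of separability, and you avoid repeating the KKT algebra. The paper's version is more self-contained but duplicates work already done in Theorem~\ref{thm:attention_as_eot}. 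Both arguments hinge on the same structural observation---no coupling across rows in either the constraints or the objective---so there is no substantive mathematical difference, only a difference in packaging.
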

\begin{proof}
We use the method of Lagrange multipliers to minimize \(J(\mat{P})\) subject to the \(n\) equality constraints \(\sum_{j=1}^m P_{ij} = 1\). We introduce one Lagrange multiplier \(\lambda_i\) for each of the \(n\) constraints. The Lagrangian \(\mathcal{L}\) is:
\begin{equation}
    \mathcal{L}(\mat{P}, \bm{\lambda}) = J(\mat{P}) + \sum_{i=1}^n \lambda_i \left(1 - \sum_{j=1}^m P_{ij}\right).
\end{equation}
A crucial observation is that both the objective \(J(\mat{P})\) and the constraints are separable by row index \(i\). The Lagrangian can be rewritten as a sum of independent Lagrangians, one for each row \(\mat{P}_{i,:}\):
\begin{equation}
    \mathcal{L}(\mat{P}, \bm{\lambda}) = \sum_{i=1}^n \left( \sum_{j=1}^m \left( P_{ij} C_{ij} + \varepsilon P_{ij} \log P_{ij} \right) + \lambda_i \left(1 - \sum_{j=1}^m P_{ij}\right) \right).
\end{equation}
This separability means we can solve the problem for each row independently. For a fixed row \(i\), we set the partial derivative with respect to \(P_{ij}\) to zero. The entropic term ensures the solution lies in the relative interior of the simplex, so we can ignore the non-negativity constraints for now.
\begin{align}
    \frac{\partial \mathcal{L}}{\partial P_{ij}} &= C_{ij} + \varepsilon(\log P_{ij} + 1) - \lambda_i = 0.
\end{align}
Solving for \(P_{ij}\):
\begin{align}
    \varepsilon \log P_{ij} &= -C_{ij} + \lambda_i - \varepsilon \\
    \log P_{ij} &= \frac{-C_{ij}}{\varepsilon} + \frac{\lambda_i - \varepsilon}{\varepsilon} \\
    P_{ij} &= \exp\left(\frac{-C_{ij}}{\varepsilon}\right) \exp\left(\frac{\lambda_i - \varepsilon}{\varepsilon}\right).
\end{align}
This shows that \(P_{ij}\) is proportional to \(\exp(-C_{ij}/\varepsilon)\), and the constant of proportionality depends only on the row index \(i\). Let's denote this constant's reciprocal as \(Z_i = \exp(-(\lambda_i - \varepsilon)/\varepsilon)\). The solution has the form \(P_{ij} = \frac{1}{Z_i} \exp(-C_{ij}/\varepsilon)\). We determine the normalization constant \(Z_i\) for each row by enforcing its sum-to-one constraint:
\begin{align}
    \sum_{j=1}^m P_{ij} &= \sum_{j=1}^m \frac{1}{Z_i} \exp\left(\frac{-C_{ij}}{\varepsilon}\right) = 1 \\
    \implies Z_i &= \sum_{l=1}^m \exp\left(\frac{-C_{il}}{\varepsilon}\right).
\end{align}
Substituting \(Z_i\) back, along with the cost definition \(C_{il} = -\ip{\bm{q}_i}{\bm{k}_l}\) and the choice \(\varepsilon = \tau\), we arrive at the final solution for the optimal transport plan \(\mat{P}^\star\):
\begin{equation}
    P^\star_{ij} = \frac{\exp\left(\frac{\ip{\bm{q}_i}{\bm{k}_j}}{\tau}\right)}{\sum_{l=1}^m \exp\left(\frac{\ip{\bm{q}_i}{\bm{k}_l}}{\tau}\right)}.
\end{equation}
This expression is identical to the definition of the scaled dot-product attention weight \(A_{ij}\). Since this holds for all \(i \in \{1, \dots, n\}\) and \(j \in \{1, \dots, m\}\), we have proven that the unique optimal transport plan is the attention matrix: \(\mat{P}^\star = \mat{A}\).
\end{proof}

\newpage

\section{Deriving the Potential via Fenchel-Legendre Duality}
\label{sec:fenchel_derivation}

In Section~\ref{sec:geometry_and_potential}, we established the existence of a potential function using the Envelope Theorem and constructed its explicit form using Lagrangian duality. Here, we provide a more fundamental justification from first principles of convex analysis, centered on the Fenchel-Legendre conjugate. This perspective reveals that the Log-Sum-Exp potential arises as a direct consequence of a deep symmetry between a function and its convex conjugate. The core argument proceeds in four steps:
\begin{enumerate}
    \item We reformulate the primal EOT problem as an unconstrained optimization problem using an indicator function.
    \item We show that the optimal value of this problem is the negative of the Fenchel-Legendre conjugate of the entropy function. This identifies the potential we seek as the conjugate function itself.
    \item We invoke the Fenchel-Young duality theorem, which states that the gradient of a function's conjugate is the point that achieved the optimum in the conjugate's definition.
    \item We explicitly compute the conjugate, showing it is the Log-Sum-Exp function, and conclude that its gradient must be the softmax distribution.
\end{enumerate}

\subsection*{Reformulating the Primal Problem}
Our original problem is a constrained minimization over the probability simplex \(\ProbSimplex{m-1}\). To use the tools of Fenchel duality, we first convert this into an unconstrained problem.

\begin{definition}[Indicator Function of the Simplex]
The indicator function for the probability simplex \(\ProbSimplex{m-1}\) is defined as:
\begin{equation}
    I_{\Delta}(\bm{p}) \defeq \begin{cases} 0 & \text{if } \bm{p} \in \ProbSimplex{m-1} \\ +\infty & \text{if } \bm{p} \notin \ProbSimplex{m-1} \end{cases}
\end{equation}
This function is convex. Adding it to an objective enforces the constraint, as any point outside the simplex incurs an infinite penalty.
\end{definition}
Using this, we can define a single function \(f(\bm{p})\) that encapsulates both the entropic regularization and the simplex constraint:
\begin{equation}
    f(\bm{p}) \defeq \tau \sum_{j=1}^m p_j \log p_j + I_{\Delta}(\bm{p}).
\end{equation}
The original primal EOT problem from Definition~\ref{def:eot_problem_attn}, whose optimal value is the primal value function \(V(\bm{s})\), can now be written as an unconstrained minimization:
\begin{equation}
    V(\bm{s}) = \min_{\bm{p} \in \R^m} \left\{ -\ip{\bm{p}}{\bm{s}} + f(\bm{p}) \right\}.
\end{equation}

\subsection*{Connecting the Primal Value to the Fenchel Conjugate}
The structure of our reformulated problem perfectly matches the definition of the Fenchel-Legendre conjugate.

\begin{definition}[Fenchel-Legendre Conjugate]
The Fenchel-Legendre conjugate (or convex conjugate) of a function \(f: \R^m \to \R \cup \{+\infty\}\) is a function \(f^*: \R^m \to \R \cup \{+\infty\}\) defined as:
\begin{equation}
    f^*(\bm{s}) \defeq \sup_{\bm{p} \in \R^m} \left\{ \ip{\bm{p}}{\bm{s}} - f(\bm{p}) \right\}.
\end{equation}
\end{definition}
By comparing the definitions, we can see the relationship between our primal value \(V(\bm{s})\) and the conjugate \(f^*(\bm{s})\):
\begin{align}
    V(\bm{s}) &= \min_{\bm{p}} \{ f(\bm{p}) - \ip{\bm{p}}{\bm{s}} \} \\
              &= - \sup_{\bm{p}} \{ \ip{\bm{p}}{\bm{s}} - f(\bm{p}) \} \\
              &= -f^*(\bm{s}). \label{eq:value_as_conjugate}
\end{align}
Indeed, we seek a potential \(\phi(\bm{s})\) such that \(\nabla \phi(\bm{s}) = \bm{p}^\star(\bm{s})\). From the Envelope Theorem, we know that \(\phi(\bm{s}) = -V(\bm{s})\). Equation \eqref{eq:value_as_conjugate} now tells us that this potential must be the Fenchel conjugate itself:
\begin{equation}
    \phi(\bm{s}) = f^*(\bm{s}).
\end{equation}
Our task has been transformed: to find the potential \(\phi(\bm{s})\), we must compute the Fenchel conjugate of our entropy-plus-constraint function \(f(\bm{p})\).

\subsection*{Applying the Fenchel Duality Theorem}
In the preceding steps, we established that the potential function we seek, \(\phi(\bm{s})\), is precisely the Fenchel conjugate of the constrained entropy function \(f(\bm{p})\). That is, \(\phi(\bm{s}) = f^*(\bm{s})\). Our goal is to prove that the gradient of this potential, \(\nabla f^*(\bm{s})\), is the optimal attention distribution, \(\bm{p}^\star(\bm{s})\). We will now demonstrate this rigorously.

The argument rests on a cornerstone theorem of convex analysis that relates a function to the gradient of its conjugate. We first state this theorem and then prove that our specific problem meets its conditions.

\begin{theorem}[Gradient of the Convex Conjugate]
\label{thm:conjugate_gradient_relation}
Let \(f: \R^m \to \R \cup \{+\infty\}\) be a strictly convex and differentiable function on its domain. Let \(f^*\) be its Fenchel conjugate. Then \(f^*\) is also differentiable, and the gradient map of \(f\) is invertible, with the inverse being the gradient map of \(f^*\). This establishes the following equivalence for any pair of vectors \(\bm{p}\) and \(\bm{s}\):
\begin{equation}
    \bm{p} = \nabla f^*(\bm{s}) \quad \iff \quad \bm{s} = \nabla f(\bm{p}).
\end{equation}
\end{theorem}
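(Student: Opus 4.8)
The plan is to deduce the claimed inversion identity from two classical pillars of convex analysis: the Fenchel--Young inequality together with its equality condition, and the fact that a strictly concave maximization with a unique maximizer yields a differentiable value function whose gradient is that maximizer. I would not compute \(f^*\) explicitly here (that is deferred to the next step); instead I treat the statement abstractly, interpreting ``differentiable on its domain'' in the sense of essential smoothness, i.e.\ \(f\) is a closed proper convex function of Legendre type --- a property the scaled negative entropy restricted to \(\ProbSimplex{m-1}\) satisfies because \(\frac{d}{dx}(x\log x) = \log x + 1 \to -\infty\) as \(x \to 0^+\), forcing the gradient to blow up at the relative boundary of the domain.

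\emph{Step one: the Fenchel--Young equality characterization.} I would record the inequality \(\ip{\bm{p}}{\bm{s}} \le f(\bm{p}) + f^*(\bm{s})\), immediate from the definition of \(f^*\) as a supremum, and then isolate its equality case: equality holds at a pair \((\bm{p},\bm{s})\) if and only if \(\bm{p}\) attains the supremum \(f^*(\bm{s}) = \sup_{\bm{q}}\{\ip{\bm{q}}{\bm{s}} - f(\bm{q})\}\), and, by differentiability of \(f\), the stationarity of this concave problem (relative to the affine hull of the domain, essential smoothness placing the maximizer in the relative interior) is exactly \(\bm{s} = \nabla f(\bm{p})\). Thus \(\bm{s} = \nabla f(\bm{p})\) is equivalent to the statement that \(\bm{p}\) is a maximizer in the definition of \(f^*(\bm{s})\).

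\emph{Step two: the maximizer and the differentiability of \(f^*\).} Existence of a maximizer follows from Weierstrass --- the objective \(\bm{q}\mapsto \ip{\bm{q}}{\bm{s}} - f(\bm{q})\) is upper semicontinuous with compact effective domain --- and uniqueness follows because strict convexity of \(f\) makes it strictly concave; call the unique maximizer \(\bm{p}^\star(\bm{s})\). Since \(f^*\) is a supremum of affine functions of \(\bm{s}\) it is convex, and testing the defining supremum at a perturbed point with the fixed choice \(\bm{q}=\bm{p}^\star(\bm{s})\) shows \(\bm{p}^\star(\bm{s}) \in \partial f^*(\bm{s})\). To upgrade this to differentiability I would show the subgradient is unique: by the biconjugation identity \(f^{**}=f\), any \(\bm{v}\in\partial f^*(\bm{s})\) satisfies \(\bm{s}\in\partial f(\bm{v})\), hence attains equality in Fenchel--Young, hence is a maximizer, hence equals \(\bm{p}^\star(\bm{s})\); so \(\partial f^*(\bm{s})\) is a singleton and \(\nabla f^*(\bm{s}) = \bm{p}^\star(\bm{s})\). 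Alternatively this step is Danskin's theorem / the Envelope Theorem already invoked in Section~\ref{sec:geometry_and_potential}.

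\emph{Step three: assembly.} Chaining the two steps, \(\bm{p} = \nabla f^*(\bm{s})\) holds iff \(\bm{p} = \bm{p}^\star(\bm{s})\), iff \(\bm{p}\) is the maximizer in the definition of \(f^*(\bm{s})\), iff \(\bm{s} = \nabla f(\bm{p})\) --- which is the stated bi-implication, and simultaneously exhibits \(\nabla f^*\) and \(\nabla f\) as mutually inverse. I expect the main obstacle to be the rigorous treatment of the lower-dimensional closed domain: the bare hypothesis ``\(f\) strictly convex and differentiable'' must be sharpened to Legendre type (essential smoothness plus essential strict convexity) for \(f^*\) to be differentiable everywhere and for the maximizer to be interior, and one must carefully separate the genuine gradient direction from the normalization multiplier along \(\bm{1}\) (the source of softmax's shift invariance). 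Verifying that the constrained negative entropy clears this bar, and stating the equivalence with the correct quotient on the domain side, is the delicate bookkeeping the proof must get right.
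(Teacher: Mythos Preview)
Your proposal is correct and follows essentially the same route as the paper: both derive the equivalence from the Fenchel--Young equality condition together with the biconjugation identity \(f^{**}=f\). The paper's proof is a two-sentence sketch that simply cites this as a standard result, whereas you supply the details the paper omits --- existence and uniqueness of the maximizer, the singleton-subgradient argument for differentiability of \(f^*\), and the Legendre-type hypothesis needed to keep the maximizer in the relative interior --- so your version is strictly more rigorous than what appears in the paper.
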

\begin{proof}
This is a standard result from convex analysis, often derived from the Fenchel-Young equality condition. For a differentiable function \(f\), the equality \(f(\bm{p}) + f^*(\bm{s}) = \ip{\bm{p}}{\bm{s}}\) holds if and only if \(\bm{s} = \nabla f(\bm{p})\). By symmetry (since \(f^{**}=f\)), the same equality holds if and only if \(\bm{p} = \nabla f^*(\bm{s})\). The equivalence follows directly.
\end{proof}

This theorem provides an explicit, invertible relationship. If we can establish one side of the implication, the other is guaranteed to hold. Our strategy is to show that the vector \(\bm{p}^\star(\bm{s})\), defined as the unique maximizer in the conjugate's definition, satisfies the right-hand side of the equivalence, which will force the left-hand side to be true.

\begin{lemma}[The Gradient of the Potential]
\label{lem:gradient_of_potential}
Let 
\begin{align}
f(\bm{p}) = \tau \sum_{j} p_j \log p_j + I_{\Delta}(\bm{p})
\end{align}
be our constrained entropy function, and let \(f^*(\bm{s})\) be its Fenchel conjugate. Let \(\bm{p}^\star(\bm{s})\) be the unique vector that achieves the supremum in the definition of \(f^*(\bm{s})\). Then the gradient of the conjugate function is precisely this maximizer:
\begin{equation}
    \nabla f^*(\bm{s}) = \bm{p}^\star(\bm{s}).
\end{equation}
\end{lemma}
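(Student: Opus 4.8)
The plan is to prove $\nabla f^*(\bm{s}) = \bm{p}^\star(\bm{s})$ by combining an existence-uniqueness argument for the maximizer in the definition of $f^*$ with the Envelope Theorem, rather than by a literal application of Theorem~\ref{thm:conjugate_gradient_relation}. The reason for this detour is that the differentiability hypothesis of that theorem is not strictly met: because $f(\bm{p}) = \tau\sum_j p_j\log p_j + I_\Delta(\bm{p})$ contains the indicator $I_\Delta$, and because $x\log x$ has a gradient that diverges at the relative boundary of $\ProbSimplex{m-1}$, the function $f$ is not a smooth function on all of $\R^m$. The argument must therefore be run on the relative interior of the simplex, where $f$ is genuinely smooth and strictly convex, and the global gradient statement recovered afterward.

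First I would establish that the supremum $f^*(\bm{s}) = \sup_{\bm{p}\in\R^m}\{\ip{\bm{p}}{\bm{s}} - f(\bm{p})\}$ is attained at a unique point. Since $I_\Delta$ forces the effective domain to be the non-empty compact set $\ProbSimplex{m-1}$, and on it the map $\bm{p}\mapsto \ip{\bm{p}}{\bm{s}} - \tau\sum_j p_j\log p_j$ is continuous and strictly concave (a linear term minus the strictly convex entropy), the Weierstrass theorem plus strict concavity give a unique maximizer $\bm{p}^\star(\bm{s})$. I would then argue this maximizer lies in the relative interior: the partial derivative of $-\tau p_j\log p_j$ in $p_j$ tends to $+\infty$ as $p_j\to 0^+$, so shifting an infinitesimal amount of mass onto any vanishing coordinate strictly increases the objective, which excludes every relative-boundary point.

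Next I would characterize $\bm{p}^\star(\bm{s})$ via the KKT stationarity condition for the single equality constraint $\sum_j p_j = 1$ (the non-negativity constraints are inactive by the interiority just shown). Stationarity reads $s_j - \tau(\log p^\star_j + 1) - \lambda = 0$ for all $j$, hence $p^\star_j \propto \exp(s_j/\tau)$, and after normalization $\bm{p}^\star(\bm{s})$ is exactly the softmax/attention distribution of Theorem~\ref{thm:attention_as_eot}. Finally I would obtain the gradient through the Envelope Theorem in its maximization form (Danskin): with $\Phi(\bm{p},\bm{s}) \defeq \ip{\bm{p}}{\bm{s}} - f(\bm{p})$ continuous on a compact domain and the maximizer $\bm{p}^\star(\bm{s})$ unique, $f^*$ is differentiable at $\bm{s}$ with $\nabla f^*(\bm{s}) = \nabla_{\bm{s}}\Phi(\bm{p},\bm{s})\big|_{\bm{p}=\bm{p}^\star(\bm{s})}$; since only the term $\ip{\bm{p}}{\bm{s}}$ depends on $\bm{s}$ and its $\bm{s}$-gradient is $\bm{p}$, this equals $\bm{p}^\star(\bm{s})$. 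An equivalent closing move uses the Fenchel--Young equality: the stationarity condition exhibits $\bm{s}$ as a valid value of $\nabla f$ at the interior point $\bm{p}^\star$ (modulo the simplex's affine constraint, which contributes only the multiple $\lambda\mathbf{1}$), so Theorem~\ref{thm:conjugate_gradient_relation}, applied to $f$ restricted to the smooth relative interior, yields $\bm{p}^\star = \nabla f^*(\bm{s})$.

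The main obstacle is the one flagged above: managing the non-smoothness of $f$ so that either the Envelope Theorem or Theorem~\ref{thm:conjugate_gradient_relation} can be invoked rigorously. Concretely, this requires the careful verification that (i) the maximizer is interior to the simplex, so the entropy gradient is finite there, and (ii) the affine constraint $\sum_j p_j = 1$ only adds a multiple of $\mathbf{1}$ to the first-order optimality condition and hence does not obstruct the uniqueness of the resulting $\bm{p}^\star(\bm{s})$; everything else is routine once these two points are pinned down.
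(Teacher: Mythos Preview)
Your proposal is correct and, in fact, more careful than the paper's own argument. The paper's proof simply writes the first-order condition $\nabla_{\bm{p}}(\ip{\bm{p}}{\bm{s}} - f(\bm{p})) = \bm{0}$, reads off $\bm{s} = \nabla f(\bm{p}^\star(\bm{s}))$, and then invokes Theorem~\ref{thm:conjugate_gradient_relation} to flip this into $\bm{p}^\star(\bm{s}) = \nabla f^*(\bm{s})$. It does not pause to address the issue you raise---that $f$ contains the indicator $I_\Delta$ and hence is not differentiable on $\R^m$ in the classical sense---so its use of the equation $\bm{s} = \nabla f(\bm{p}^\star)$ is informal.

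Your route via Danskin's theorem sidesteps this entirely: once you have established that the maximizer is unique and lies in the relative interior, Danskin requires only continuity of $\Phi(\bm{p},\bm{s})$ on a compact set and smoothness in the parameter $\bm{s}$, both of which are immediate. The interiority argument (the $-\tau p_j\log p_j$ derivative blowing up at the boundary) is the key extra ingredient that makes your version rigorous, and your alternative closing---restricting $f$ to the relative interior where it is genuinely smooth before applying the Fenchel--Young equivalence---is exactly the right way to salvage the paper's shortcut. What the paper's approach buys is brevity and a direct appeal to the conjugate-gradient duality; what yours buys is a proof that actually meets the hypotheses of the theorems it cites.
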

\begin{proof}
By definition, \(\bm{p}^\star(\bm{s})\) is the solution to the following optimization problem for a given \(\bm{s}\):
\begin{equation}
    \bm{p}^\star(\bm{s}) \defeq \arg\sup_{\bm{p} \in \R^m} \left\{ \ip{\bm{p}}{\bm{s}} - f(\bm{p}) \right\}.
\end{equation}
Let the objective function of this optimization be \(h(\bm{p}) \defeq \ip{\bm{p}}{\bm{s}} - f(\bm{p})\). Since \(f(\bm{p})\) is strictly convex, \(-f(\bm{p})\) is strictly concave. The addition of the linear term \(\ip{\bm{p}}{\bm{s}}\) does not change concavity, so \(h(\bm{p})\) is a strictly concave function.

A necessary and sufficient first-order condition for \(\bm{p}^\star(\bm{s})\) to be the unique maximizer of \(h(\bm{p})\) is that the gradient of \(h(\bm{p})\) with respect to \(\bm{p}\) must be zero at that point.
\begin{equation}
    \nabla_{\bm{p}} h(\bm{p}) \bigg|_{\bm{p}=\bm{p}^\star(\bm{s})} = \bm{0}.
\end{equation}
We compute this gradient:
\begin{equation}
    \nabla_{\bm{p}} h(\bm{p}) = \nabla_{\bm{p}} \left( \ip{\bm{p}}{\bm{s}} - f(\bm{p}) \right) = \bm{s} - \nabla f(\bm{p}).
\end{equation}
Setting the gradient to zero at \(\bm{p}^\star(\bm{s})\) gives us the defining property of the maximizer:
\begin{equation}
    \bm{s} - \nabla f(\bm{p}^\star(\bm{s})) = \bm{0} \quad \implies \quad \bm{s} = \nabla f(\bm{p}^\star(\bm{s})).
    \label{eq:maximizer_property}
\end{equation}
This is a direct consequence of the definition of \(\bm{p}^\star(\bm{s})\). We have now shown that the pair of vectors \((\bm{p}^\star(\bm{s}), \bm{s})\) satisfies the right-hand side of the equivalence in Theorem~\ref{thm:conjugate_gradient_relation}. This implies that the left-hand side of the equivalence must also hold for this pair. Substituting \(\bm{p} = \bm{p}^\star(\bm{s})\), we conclude:
\begin{equation}
    \bm{p}^\star(\bm{s}) = \nabla f^*(\bm{s}).
\end{equation}
This completes the proof.
\end{proof}

\subsection*{Explicit Computation of the Conjugate Potential}
All that remains is to compute \(f^*(\bm{s})\) to find the explicit formula for our potential \(\phi(\bm{s})\).
\begin{align}
    \phi(\bm{s}) = f^*(\bm{s}) &= \sup_{\bm{p} \in \R^m} \left\{ \ip{\bm{p}}{\bm{s}} - f(\bm{p}) \right\} \\
                               &= \sup_{\bm{p} \in \ProbSimplex{m-1}} \left\{ \sum_{j=1}^m p_j s_j - \tau \sum_{j=1}^m p_j \log p_j \right\}. \label{eq:supremum_problem}
\end{align}
This is a constrained maximization problem. We solve it with a Lagrangian \(\mathcal{L}\) for the constraint \(\sum p_j = 1\):
\begin{equation}
    \mathcal{L}(\bm{p}, \lambda) = \sum_j p_j s_j - \tau \sum_j p_j \log p_j - \lambda \left( \sum_j p_j - 1 \right).
\end{equation}
The stationarity condition, \(\partial\mathcal{L}/\partial p_j = 0\), yields:
\begin{equation}
    s_j - \tau(\log p_j + 1) - \lambda = 0.
\end{equation}
This equation represents the generalized form of the KKT condition encountered in the proof of Theorem~\ref{thm:attention_as_eot}. In that specific instance, the problem was a minimization where the cost term $-s_j$ was instantiated as $C_j = -\ip{\bm{q}}{\bm{k}_j}$. Critically, the algebraic procedure for finding the normalized distribution \(\bm{p}^\star\) from the stationarity condition is identical in both cases. The equation implies that \(p_j\) is proportional to \(\exp(s_j/\tau)\). As was explicitly demonstrated in the proof of Theorem~\ref{thm:attention_as_eot}, enforcing the normalization constraint \(\sum_j p_j = 1\) uniquely determines the solution to be the softmax function:
\begin{equation}
    p_j^\star(\bm{s}) = \frac{\exp(s_j/\tau)}{\sum_{l=1}^m \exp(s_l/\tau)}.
\end{equation}
To find the value of the potential \(\phi(\bm{s})\), we substitute this optimal \(\bm{p}^\star\) back into the objective of the supremum problem in Eq.~\eqref{eq:supremum_problem}:
\begin{align}
    \phi(\bm{s}) &= \sum_j p_j^\star s_j - \tau \sum_j p_j^\star \log p_j^\star \\
                 &= \sum_j p_j^\star s_j - \tau \sum_j p_j^\star \left( \frac{s_j}{\tau} - \log Z \right) \quad \text{where } Z = \sum_l e^{s_l/\tau}\\
                 &= \sum_j p_j^\star s_j - \sum_j p_j^\star s_j + \tau \log Z \sum_j p_j^\star \\
                 &= \tau \log Z \quad \text{(since } \sum_j p_j^\star = 1 \text{)}\\
                 &= \tau \log\left(\sum_{l=1}^m \exp\left(\frac{s_l}{\tau}\right)\right).
\end{align}
We have successfully derived that the potential \(\phi(\bm{s})\) is the Log-Sum-Exp function.

\begin{remark}[Equivalence to Lagrangian Duality]
The procedure followed here is conceptually the "pure" form of the argument. The Lagrangian duality approach used in the main text is a more procedural, but ultimately equivalent, method for computing the Fenchel conjugate of a function subject to affine constraints. The Lagrange multipliers in that framework play the role of the conjugate variables. This Fenchel-Legendre perspective provides the deeper reason why such a dual approach works and reveals the fundamental geometric relationship between the EOT objective and the LSE potential.
\end{remark}

\end{document}